\theoremstyle{plain}
\newtheorem{theorem}{Theorem}[section]
\newtheorem{proposition}[theorem]{Proposition}
\newtheorem{lemma}[theorem]{Lemma}
\theoremstyle{definition}
\newtheorem{definition}[theorem]{Definition}
\theoremstyle{remark}
\newtheorem{remark}[theorem]{Remark}
\newcommand{\R}{\mathbb{R}}
\newcommand{\Id}{{\operatorname{Id}}}
\icmltitlerunning{Sliced ReLU Attention}
\begin{document}

\twocolumn[
  \icmltitle{Sliced ReLU attention: \\ Quasi-linear contextual expressivity via  sorting}



  \icmlsetsymbol{equal}{*}

  \begin{icmlauthorlist}
    \icmlauthor{Fran\c{c}ois-Xavier Vialard}{yyy}
    \icmlauthor{Siwan Boufadène}{yyy}
  \end{icmlauthorlist}

  \icmlaffiliation{yyy}{LIGM, Univ. Gustave Eiffel.
  }

  \icmlcorrespondingauthor{Fran\c{c}ois-Xavier Vialard}{francois-xavier.vialard@univ-eiffel.fr}
  \icmlcorrespondingauthor{Siwan Boufad\`ene}{siwan.boufadene@univ-eiffel.fr}

  \vskip 0.3in
]



\printAffiliationsAndNotice{}  

\begin{abstract}
        We introduce sliced ReLU attention, a new attention mechanism that departs structurally from both softmax and its approximation alternatives. Instead of applying a nonlinearity to pairwise dot products, we operate on one-dimensional projections of key–query \emph{differences} and leverage sorting to obtain quasi-linear complexity. This construction yields a differentiable, non-symmetric kernel that can be computed in $O(n\log(n))$ through a sorting procedure, making it suitable for very long contexts. 
        Beyond computational benefits, the model retains strong theoretical expressive power: we establish two in-context expressivity results, previously known for softmax attention, showing that sliced ReLU attention preserves the ability to perform nontrivial sequence-to-sequence disentangling tasks and satisfies a contextual universal approximation property. 
        Finally, we illustrate the potential practical interest of this kernel in small to medium-scale experiments.
\end{abstract}

\section{Introduction}
\label{sec:introduction}

Since their introduction, Transformers \cite{NIPS2017_3f5ee243} underpin most state-of-the-art neural network architectures in natural language processing and computer vision \cite{NEURIPS2020_1457c0d6, devlin2019bertpretrainingdeepbidirectional,dosovitskiy2021imageworth16x16words,liu2021swintransformerhierarchicalvision,baevski2020wav2vec20frameworkselfsupervised,chen2022beatsaudiopretrainingacoustic}. 

A key limitation of standard attention is its quadratic computational cost with respect to the input length. 
There have been several approaches to tackle this computational bottleneck. One direction focuses on more efficient implementations of standard softmax attention, such as FlashAttention \cite{dao2023flashattention2fasterattentionbetter}. Although these methods optimize memory usage and constant factors, their overall complexity remains quadratic in the input length. A second line of methods seeks to approximate the attention matrix, either through random features \cite{choromanski2022rethinkingattentionperformers}, low-rank factorization \cite{winata2020lightweightefficientendtoendspeech, hu2022lora}, or sparse or local mechanisms such as sliding windows, clustered attention or global-local mixtures \cite{beltagy2020longformerlongdocumenttransformer,vyas2020fasttransformersclusteredattention,zaheer2021bigbirdtransformerslonger}. The linearization of the softmax around the origin leads to linear attention architectures \cite{katharopoulos2020transformersrnnsfastautoregressive,qin2022cosformerrethinkingsoftmaxattention}, where the attention output is computed using kernelized feature maps via the dot product of two learnable maps, scaling linearly with sequence length. While fast, these models often suffer from degraded performance on tasks requiring rich global interactions. Other attention-free architectures have been proposed, such as state space sequence models \cite{gu2022efficientlymodelinglongsequences, goel2022itsrawaudiogeneration} or convolutional models \cite{liu2022convnet2020s}, which sometimes match or outperform Transformers on long-sequence tasks. We refer to \cite{tay2022efficienttransformerssurvey,arora2025simplelinearattentionlanguage} for a more in-depth review of efficient models.

From a theoretical perspective, recent analyses have provided deeper insight into the success of softmax-based attention architectures across diverse tasks. Transformers have been shown to possess strong universal approximation properties, enabling them to represent arbitrary sequence-to-sequence mappings \cite{yun2020transformersuniversalapproximatorssequencetosequence}. More recent theoretical works study the \emph{mean-field} limit of Transformers, extending attention to infinitely long contexts represented as probability measures in a Euclidean space \cite{vuckovic2020mathematicaltheoryattention, sander2022sinkformerstransformersdoublystochastic}. The optimal transport framework has been used to study the smoothness of Transformers in measure spaces \cite{castin2024smoothattention}, and universality results have been generalized to such spaces \cite{furuya2024transformersuniversalincontextlearners}. Similar properties hold for other attention computations, such as sigmoid-based variants in \cite{ramapuram2025theoryanalysisbestpractices}. Transformers have also been studied in a continuous-depth limit, where they act as universal measure-to-measure interpolators \cite{geshkovski2024measuretomeasureinterpolationusingtransformers}. Clustering properties of the resulting attention dynamics are studied in \cite{geshkovski2024emergenceclustersselfattentiondynamics, vuckovic2020mathematicaltheoryattention}.

In our main contribution, we utilize the slicing technique, also known as one-dimensional projections, which have been well-explored for comparison of measures \cite{NEURIPS2019_f0935e4c,hertrich2024generativeslicedmmdflows} or kernel computations \cite{2025arXiv251011478R,hertrich2024fast}. This idea has also been put forward to post-process the standard attention matrix to transform it into a bi-stochastic matrix in \cite{Shahbazi2025ESPFormerDA} to reduce the computational burden of \cite{sander2022sinkformerstransformersdoublystochastic}. 
Rather than viewing slicing primarily as a computational approximation, we use it to investigate a new family of kernel-based attention models.
Finally, the technique of differential sorting, together with the Sinkhorn algorithm, has been used in \cite{pmlr-v119-tay20a} to propose an efficient sparse approximation of the softmax attention.
Another close model of efficient attention is the reformer architecture \cite{kitaev2020reformerefficienttransformer} that uses an approximation of the standard attention based on locality sensitive hashing, which results in an $n \log(n)$ complexity. 
 In \cite{yuan2023sliceformermakemultiheadattention}, slicing and sorting are used to produce an attention mechanism that encodes implicit attention through a permutation matrix, which suffers from limitations in expressivity. In sharp contrast with these directions, we explore a different model of attention, based on sorting and slicing, with expressivity properties.

\noindent{\textbf{Main contributions:} }
Our work aligns with the broader effort to investigate alternative attention mechanisms from a kernel-based perspective, departing from the softmax function. 
Somewhat surprisingly, given the importance of attention in modern architectures, relatively few fundamentally distinct alternatives have been proposed.
Moreover, the reasons for the empirical dominance of softmax attention against its competitors remain poorly understood, although a few hypotheses have been recently proposed \cite{arora2023zoologymeasuringimprovingrecall,duranthon2025statisticaladvantagesoftmaxattention,shen2024scalinglawslinearcomplexity}.
Most of these alternatives are driven by the goal of mitigating the quadratic computational cost inherent to attention. However, in one-dimensional settings, this footprint can sometimes be reduced to quasi-linear time through sorting. Motivated by the possibility of generating an attention-like mechanism through sorting,
we introduce a new quasi-linear Transformer architecture, based on a sliced kernel computation, extending ideas from \cite{boufadene2025fastlargedeformationmatching}. In contrast to other efficient architectures, our method enables exact, global, and expressive kernel-based interactions while maintaining a quasi-linear computational cost and linear memory usage.

From a theoretical standpoint, we show that our model satisfies the same universality properties as those established for softmax attention in \cite{furuya2024transformersuniversalincontextlearners}, with depth as the only asymptotically growing parameter. We also prove that our proposed ReLU attention is a universal approximator of sequence-to-sequence functions, in the same way that softmax attention is \cite{yun2020transformersuniversalapproximatorssequencetosequence}.

Finally, we provide an empirical validation of the method on a range of standard Transformer benchmarks. We first evaluate sliced ReLU attention on the Long Range Arena benchmark \cite{tay2020longrangearenabenchmark}, which is specifically designed to test the ability to model long-context reasoning. We also provide a small-scale experiment on Vision Transformer.  We then examine the behavior of our model on geometric classification problems such as ModelNet40 \cite{Wu2015}, using a Point Cloud Transformer architecture \cite{Guo2021PCT}. We also demonstrate that sliced ReLU attention supports medium-scale masked language model pretraining by training a BERT-like encoder \cite{devlin2019bertpretrainingdeepbidirectional}, followed by downstream evaluation on the GLUE benchmark \cite{gluedataset_2018}, as a validation of practical compatibility with modern Transformer training pipelines.


\noindent{\textbf{Perspectives:} }
On the empirical side, it would be relevant to evaluate the model on larger-scale pretraining, multimodal contexts, or generative settings, where global and efficient computations are critical. A finer study of the difference between standard softmax attention and our ReLU-based version would also be valuable, as the two architectures may exhibit different strengths depending on the task and structure of the input.
From a theoretical standpoint, the fact that the expressivity results are on par with softmax attention suggests that further progress may hinge on understanding and optimizing the training dynamics of this attention mechanism. In particular, the interplay between embedding dimension and head count appears central for capturing the full capacity of sliced ReLU attention, and clarifying these trade-offs is an important direction for future work. 

\section{Model}\label{sec:model}
In this section, we introduce our ReLU attention models\footnote{The source code is 
available 
at \url{https://github.com/SiwanB/Sliced_ReLU_Attention.git}.}
as a scalable alternative to the standard quadratic in time softmax attention. 

\subsection{Standard Attention}
In Transformers, the standard attention mechanism computes a relevance-weighted combination of value vectors using the softmax function. More precisely, given parameters $\theta = (Q,K,V) \in \R^{3 \times d\times d}$ (possibly affine transformations) and an input sequence $X = (x_1, \ldots, x_n) \in \R^{n\times d}$, an attention head computes
\vspace{-0.3em}
\begin{equation}\label{eq:softmax_att}
\mathcal{A}_\theta^{\text{sftm}}(x_i,(x_1, \ldots, x_n)) \coloneqq \sum_{j=1}^n \frac{\exp(\langle Q x_i,K x_j\rangle)}{\sum_k \exp(\langle Qx_i,Kx_k\rangle)}V x_j \,.
\end{equation}
While expressive, computing all pairwise interactions induces a quadratic cost in the sequence length $n$, which quickly becomes a bottleneck for long contexts. 

A useful observation is that when a Layer Norm is applied to the inputs \cite{xiong2020layernormalizationtransformerarchitecture}, the quantity $\exp(\langle Q x_i,K x_j\rangle)$ behaves similarly to the Gaussian kernel $\exp(-\| Q x_i - K x_j \|^2)$ once the vectors lie approximately on a sphere. This kernel viewpoint motivates the search for alternative non-negative and geometrically meaningful kernels that may be cheaper to compute. 

\subsection{Sliced ReLU Attention}
Based on the previous observations, we propose an alternative attention mechanism based on a \textbf{projected ReLU kernel}, which preserves non-negativity and expressivity while allowing efficient computation. We use the same parameters $\theta = (Q,K,V) \in \R^{3 \times d\times d}$ (or affine transformation), and use a projection operator $\Pi : \R^d \mapsto \R$, where $\Pi$ may be a linear projection or a small MLP. Given these parameters, our ReLU attention head computes
\vspace{-0.3em}
\begin{equation}\label{eq:relu_att}   \mathcal{A}_{\theta,\Pi}^{\text{ReLU}}(x_i,(x_1, \ldots, x_n)) \coloneqq \sum_{j=1}^n \frac{\operatorname{ReLU}(\Pi Q x_i - \Pi K x_j)}{\sum_{l=1}^n |\Pi Q x_i - \Pi K x_l|}V x_j \,.
\end{equation}
Here, the $\operatorname{ReLU}$ function acts as a non-negative, difference-based interaction kernel. In contrast to random-feature or dot-product–based approximations, our mechanism relies on projecting queries and keys to one-dimensional scores and applying ReLU to their differences, which is the key ingredient enabling an efficient sliced computation. Several normalizations are possible. Empirically, we found that using the absolute difference $|\Pi Q x_i - \Pi K x_l|$ yields more stable training than normalizing by $\operatorname{ReLU}(\Pi Q x_i - \Pi K x_l)$, whose value may vanish or fluctuate sharply when projected scores are close. This scale-based normalization consistently outperformed enforcing a probability simplex normalization as in softmax. Determining an optimal normalization remains an open question.

The geometrical interpretation is the following: keys with low projected scores (i.e. large negative values relative to a query) attend to most queries, while keys with high projected scores influence only a few. In our experiments, $\Pi$ is implemented as a one-hidden-layer MLP mapping tokens in $\mathbb R^d$ to $\mathbb R$ (or to $\R^H$ for multi-head attention), with hidden dimension equal to $d$ to keep the computational overhead and parameter count minimal. This projection defines the slicing direction along which the ReLU interaction is computed, and different learned MLPs produce different interaction profiles.


\subsubsection{Centered value parametrization for ReLU attention}

During training, we consistently observe better performance when the value vectors are centered, i.e. using $Vx_j - \frac{1}{n}\sum_l V x_l$. This operation enforces a zero-sum constraint on the values. This phenomenon can be explained by a structural property of the ReLU kernel. Although asymmetric, the kernel $(x,y) \mapsto \operatorname{ReLU}(x-y)$ is conditionally positive definite of order 1: it induces a positive definite quadratic form on vectors $\gamma \in \R^n$ that verify $\sum_i \gamma_i = 0$ \cite{Wendland_2004,auffray_pdcpker_2009}. A short derivation (see Appendix \ref{app:cpd_relu}) shows that this follows from the conditional positive definiteness of the Energy Distance kernel.

This insight explains the empirical benefit of using centered values: it ensures that the attention output is computed within the subspace where the ReLU kernel behaves in a stable, positive-definite manner. The resulting effective attention computation becomes:
\begin{multline}\label{eq:relu_att_centered}   \mathcal{A}_{\theta,\Pi}^{\text{ReLU}}(x_i,(x_1, \ldots, x_n)) \coloneqq \\ \sum_{j=1}^n \frac{\operatorname{ReLU}(\Pi Q x_i - \Pi K x_j)}{\sum_{l=1}^n |\Pi Q x_i - \Pi K x_l|}(Vx_j - \frac{1}{n}\sum_l V x_l) \,.
\end{multline}

\subsubsection{Quasi-Linear complexity}\label{SecQuasiLinearComputation}
Although a naive implementation of our ReLU attention requires $O(n^2)$ operations, the structure of the ReLU kernel allows the full attention computation to be performed in $O(n \log n)$ time.
The key observation is that the ReLU kernel can be written as an asymmetric variant of the Energy Distance kernel in dimension one. For any $x,y \in \R$:
\begin{equation}\label{eq:relu_is_ed}
    \operatorname{ReLU}(x-y) = \frac{|x-y|}{2} + \frac{x-y}{2} \,.
\end{equation}
This identity makes it possible to reuse the sliced computation strategy of \cite{boufadene2025fastlargedeformationmatching}, while adapting it to the ReLU kernel.

\begin{proposition}
    Given inputs $(x_1, \ldots, x_n) \in \R^{n \times d}$, parameters $\theta$, and a projection operator $\Pi$, the vector of ReLU attention 
    \begin{equation}
        (\mathcal{A}_{\theta,\Pi}^{\operatorname{ReLU}}(x_i,(x_1, \ldots, x_n)), 1 \leq i \leq n) \in \R^{n \times d}
    \end{equation}
    can be computed in $O(n \log n )$ time.
\end{proposition}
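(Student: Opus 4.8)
The plan is to exploit the decomposition \eqref{eq:relu_is_ed} to split each attention head into an \emph{absolute-value} (Energy Distance) contribution and a purely \emph{linear} contribution, and then to show that the bottleneck absolute-value part can be evaluated for all queries simultaneously through a single sort followed by prefix-sum lookups. Write $a_i \defeq \Pi Q x_i \in \R$ and $b_j \defeq \Pi K x_j \in \R$ for the one-dimensional projected scores, which are computed in $O(n)$ passes over the tokens. Applying \eqref{eq:relu_is_ed} to the numerator of \eqref{eq:relu_att} gives
\begin{equation*}
    \sum_{j=1}^n \operatorname{ReLU}(a_i - b_j)\, V x_j = \tfrac12 \sum_{j=1}^n |a_i - b_j|\, V x_j + \tfrac12\Bigl(a_i \sum_{j=1}^n V x_j - \sum_{j=1}^n b_j\, V x_j\Bigr) \,,
\end{equation*}
so that the linear part is handled in $O(n)$ once the two global sums $\sum_j V x_j$ and $\sum_j b_j\, V x_j$ are precomputed. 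It therefore suffices to evaluate, for every $i$, the weighted sum of absolute values $\sum_j |a_i - b_j|\, w_j$ with vector weights $w_j = V x_j$ (numerator) and with scalar weights $w_j = 1$ (the denominator of \eqref{eq:relu_att}).

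First I would reduce the absolute value to a signed quantity by conditioning on the sign of $a_i - b_j$. Sorting the keys so that $b_{\sigma(1)} \le \cdots \le b_{\sigma(n)}$ costs $O(n\log n)$, after which I precompute the prefix sums $S^w_k = \sum_{m\le k} w_{\sigma(m)}$ and $S^{bw}_k = \sum_{m \le k} b_{\sigma(m)}\, w_{\sigma(m)}$ together with their totals $T^w, T^{bw}$; this is $O(nd)$ for the vector weights and $O(n)$ for the scalar weights. For a fixed query $a_i$, letting $k_i$ denote the number of keys with $b_j \le a_i$, splitting the sum at $a_i$ yields
\begin{equation*}
    \sum_{j=1}^n |a_i - b_j|\, w_j = a_i\,(2 S^w_{k_i} - T^w) + (T^{bw} - 2 S^{bw}_{k_i}) \,,
\end{equation*}
which is an $O(d)$ evaluation once $k_i$ is known. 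The index $k_i$ is obtained either by binary search in the sorted key array ($O(\log n)$ per query) or by sorting the queries jointly and merging, so that all $n$ lookups cost $O(n\log n)$ in total.

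Putting the pieces together, the projection step is $O(n)$, the single sort of the keys is $O(n\log n)$, the prefix-sum precomputation is $O(nd)$, the query localization is $O(n\log n)$, and each of the $n$ per-query combinations — forming the numerator from its absolute-value and linear parts, forming the scalar denominator, and dividing — is $O(d)$. The value-centering of \eqref{eq:relu_att_centered} only modifies the weights $w_j$ by subtracting the constant vector $\tfrac1n\sum_l V x_l$, which is itself precomputed in $O(nd)$ and does not change the complexity. Summing over all stages gives the claimed bound, treating the embedding dimension $d$ as a fixed constant so that $O(nd) = O(n)$. The one genuinely nontrivial step is the absolute-value (Energy Distance) term: a direct evaluation is inherently quadratic because of the pairwise distances, and it is precisely the one-dimensional structure together with the sort-and-prefix-sum identity above — the adaptation of \cite[Prop.~4.1]{boufadene2025fastlargedeformationmatching} announced before the statement — that removes this quadratic cost, whereas the linear term is immediate.
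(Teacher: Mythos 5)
Your proof is correct, and it reaches the same $O(n\log n)$ bound through the same fundamental primitive as the paper, namely a one-dimensional sort followed by cumulative sums, but the two arguments are organized differently. The paper never actually uses the decomposition \eqref{eq:relu_is_ed} inside its proof: it sorts the \emph{concatenated} array of projected keys and queries (assigning zero weights to the query positions), observes that on a sorted array $\operatorname{ReLU}(z_i - z_j)$ vanishes for $j > i$, so that $\sum_j \operatorname{ReLU}(z_i - z_j)\gamma_j = z_i \sum_{j\le i}\gamma_j - \sum_{j\le i} z_j \gamma_j$ is directly a pair of running sums, and then recovers the attention outputs by unsorting and reading off the query indices. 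You instead carry out the route that the paper only announces in the surrounding text: split $\operatorname{ReLU}$ into its Energy-Distance part plus a linear part via \eqref{eq:relu_is_ed}, evaluate the linear part from two global sums, sort the keys alone, and localize each query by binary search (or a merge) to obtain the rank $k_i$ entering your closed-form prefix-sum identity, which is algebraically correct (ties cause no issue, since both $\operatorname{ReLU}$ and the absolute value vanish there). The trade-offs are minor: the paper's concatenation trick gives a single sort-and-scan pass with no case split on the sign of $a_i - b_j$, while your version is more explicit on two points the paper leaves implicit --- the denominator $\sum_l |\Pi Q x_i - \Pi K x_l|$ of \eqref{eq:relu_att}, which you handle with unit weights by the same identity, and the fact that the centering in \eqref{eq:relu_att_centered} only shifts the weights by a precomputed constant vector and hence leaves the complexity unchanged. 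Both arguments are complete and give the claimed bound.
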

\begin{proof}
    Let $z_1 \leq z_2 \leq \ldots \leq z_N$ be sorted projection scores, with associated values $\gamma_1, \ldots, \gamma_N \in \R^d$. The ReLU interaction admits the simple identity, for $1 \leq i \leq N$:
    \vspace{-0.3em}
    \begin{multline}
        \sum_{j=1}^N \operatorname{ReLU}(z_i - z_j) \gamma_j = \sum_{j\leq i} (z_i - z_j)\gamma_j \\ = z_i\sum_{j \leq i}\gamma_j - \sum_{j \leq i} z_j \gamma_j = a_i z_i - b_i\,.
    \end{multline}
    The cumulative sums $a_i$ and $b_i$ can be computed in linear time, via the formula: $a_0 = 0, a_{i} = a_{i-1} + \gamma_i$
and  $b_0 = 0,b_{i} = b_{i-1} + \gamma_i z_i$.   
    Computing the $\operatorname{ReLU}$ convolution of unsorted data amounts to sorting it, computing the corresponding $v_i$, then unsorting it back to its original order. Sorting is an $O(n \log n )$ operation, which dominates the runtime.
    In practice, the attention mechanism uses both keys and queries. To compute all interactions in a single sorted pass, we concatenate the projected keys and queries into a single array $(k_1, \ldots, k_n, q_1, \ldots, q_n)$, assign $\gamma_i = v_i$ for $1 \leq i \leq n$, $\gamma_i = 0$ for $n+1 \leq i \leq 2n$, and apply the above linear-time computation to this unified sequence. After the convolution is computed on the sorted array, the result is unsorted back to the original order, and the outputs corresponding to the query indices $n+1 \leq i \leq 2n$ yield the attention value. The denominator being the symmetrization of the ReLU kernel, the computational complexity is the same.
\end{proof}


\begin{remark}
    While our attention is global and exact, rather than sparse, local or approximate, it still runs in $O(n\log n)$ time, in contrast with the quadratic complexity of standard softmax attention. This efficiency comes from leveraging sorting-based interactions, a well-studied and hardware-optimized primitive. Moreover, unlike previous sorting-based attention models such as SliceFormer \cite{yuan2023sliceformermakemultiheadattention}, our computations remain fully differentiable, making it compatible with smooth gradient-based optimization.
\end{remark}

\subsection{Sliced ReLU-Bump Attention}

In some applications, especially those requiring sharper and more localized token interactions, the one-sided ReLU kernel may be too diffuse. To address this, we introduce a \textbf{sliced ReLU-bump kernel}, obtained by combining shifted copies of the original sliced ReLU interaction to produce a compact, symmetric bump. 
This operation is a simple modification of sliced ReLU attention: it is a linear combination of three ReLU interactions (see Appendix, Lemma \ref{lemma:relu_bump}).
For a bandwidth parameter $b>0$, we define the sliced ReLU-bump attention (or hat kernel):
\begin{multline}\label{eq:relu_bump_att}
    \mathcal{A}_{\theta,\Pi}^{\text{ReLU-Bump}}(x_i,(x_1, \ldots, x_n)) \coloneqq \\ \frac{1}{n} \sum_{j=1}^n \operatorname{ReLU}\left(1-\frac{|\Pi Q x_i - \Pi K x_j|}{b}\right)Vx_j \,.
\end{multline}
A key implementation detail is that, in contrast to the nonlinear projection used in our previous sliced ReLU attention, the ReLU-bump variant employs a \textbf{linear projection} $\Pi$. We also use the simple $1/n$ normalization shown above, rather than the more structured normalization used in the sliced ReLU formulation, and we do not need to center the value vectors to retain performance. 
Indeed, this kernel is positive definite\footnote{Its Fourier transform is nonnegative since it is the square of a \emph{sinc} function.} in one dimension (see Appendix \ref{AppendixWendland}).
This keeps the mechanism lightweight while still enabling localized interactions.

    In 1D, this construction yields a compact, triangular interaction profile, the so-called hat function, analogous to a localized bump function, that preserves non-negativity. Crucially, each of the three ReLU terms admits the same cumulative-sum expression as the original sliced ReLU attention. Therefore, the kernel can be computed in a single sort-and-scan pass, retaining an $O(n\log n)$ time complexity. In higher dimensions, the resulting kernel induces a narrow interaction band orthogonal to the projection direction.
Empirically, this sharpened interaction pattern proves advantageous in tasks requiring \emph{fine-grained, local geometric structure}, such as point-cloud classification on ModelNet40 or dense token-level objectives in masked language modeling and NLP tasks. In these settings, the ReLU-bump kernel offers more precise control over local influence while preserving global, differentiable, and efficient sliced mechanism.
\begin{figure}[htbp]
    \centering

    \begin{subfigure}[t]{0.49\linewidth}
        \centering
        \includegraphics[width=\linewidth]{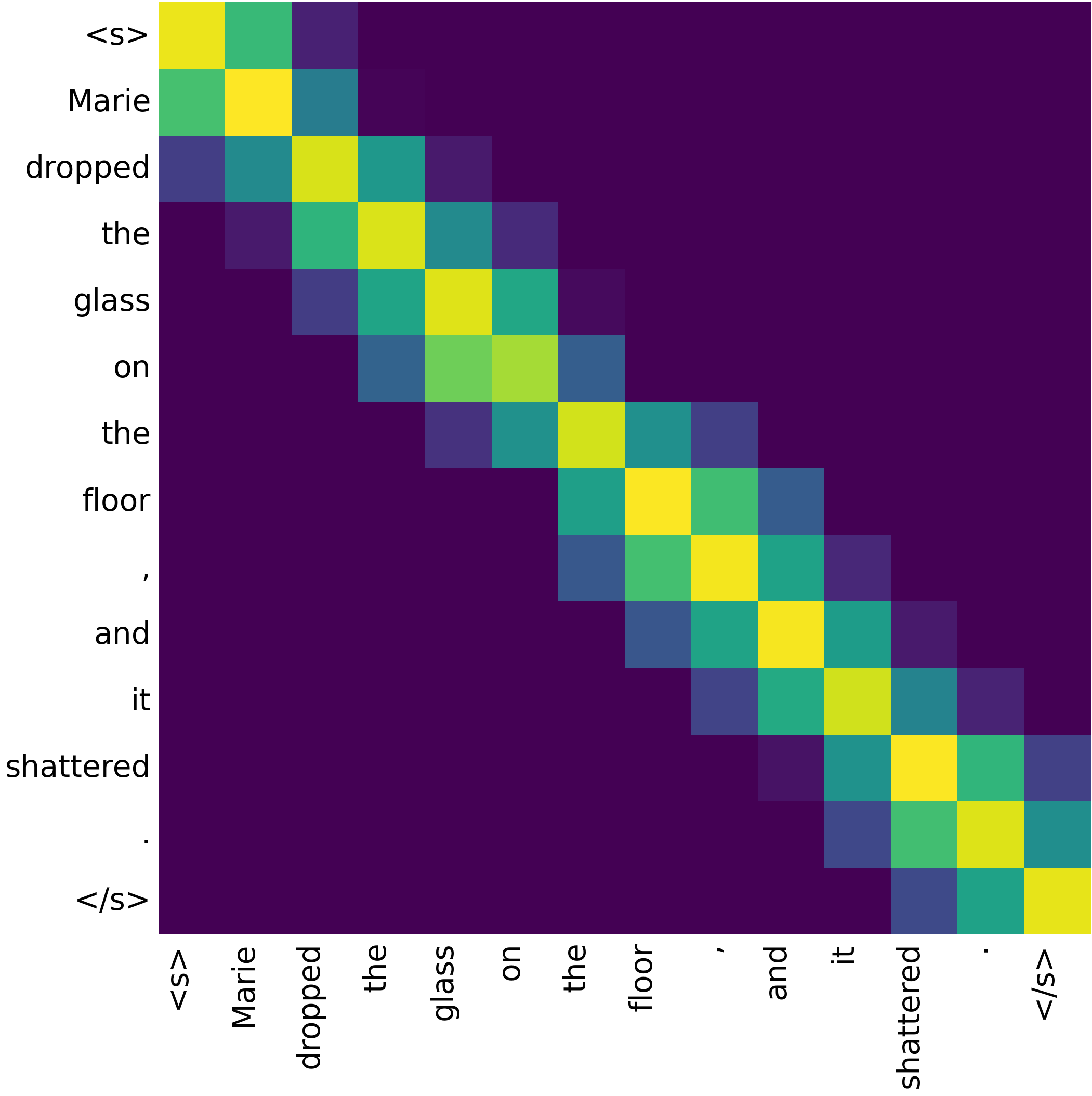}
        \caption{Local banded attention.}
        \label{fig:attn_ex1}
    \end{subfigure}\hfill
    \begin{subfigure}[t]{0.49\linewidth}
        \centering
        \includegraphics[width=\linewidth]{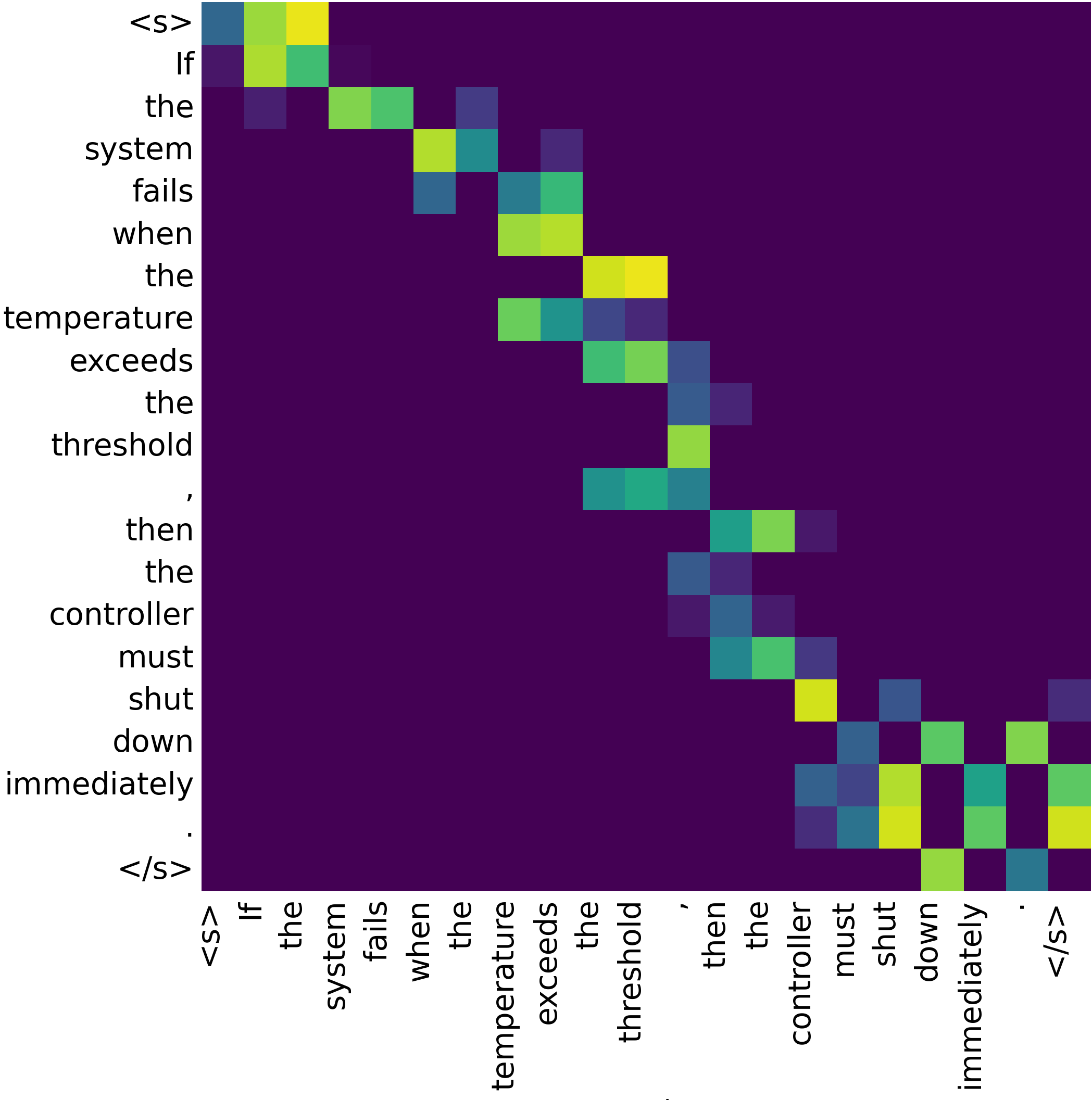}
        \caption{Structured sparse attention.}
        \label{fig:attn_ex2}
    \end{subfigure}

    \vspace{0.5em}

    \begin{subfigure}[t]{0.49\linewidth}
        \centering
        \includegraphics[width=\linewidth]{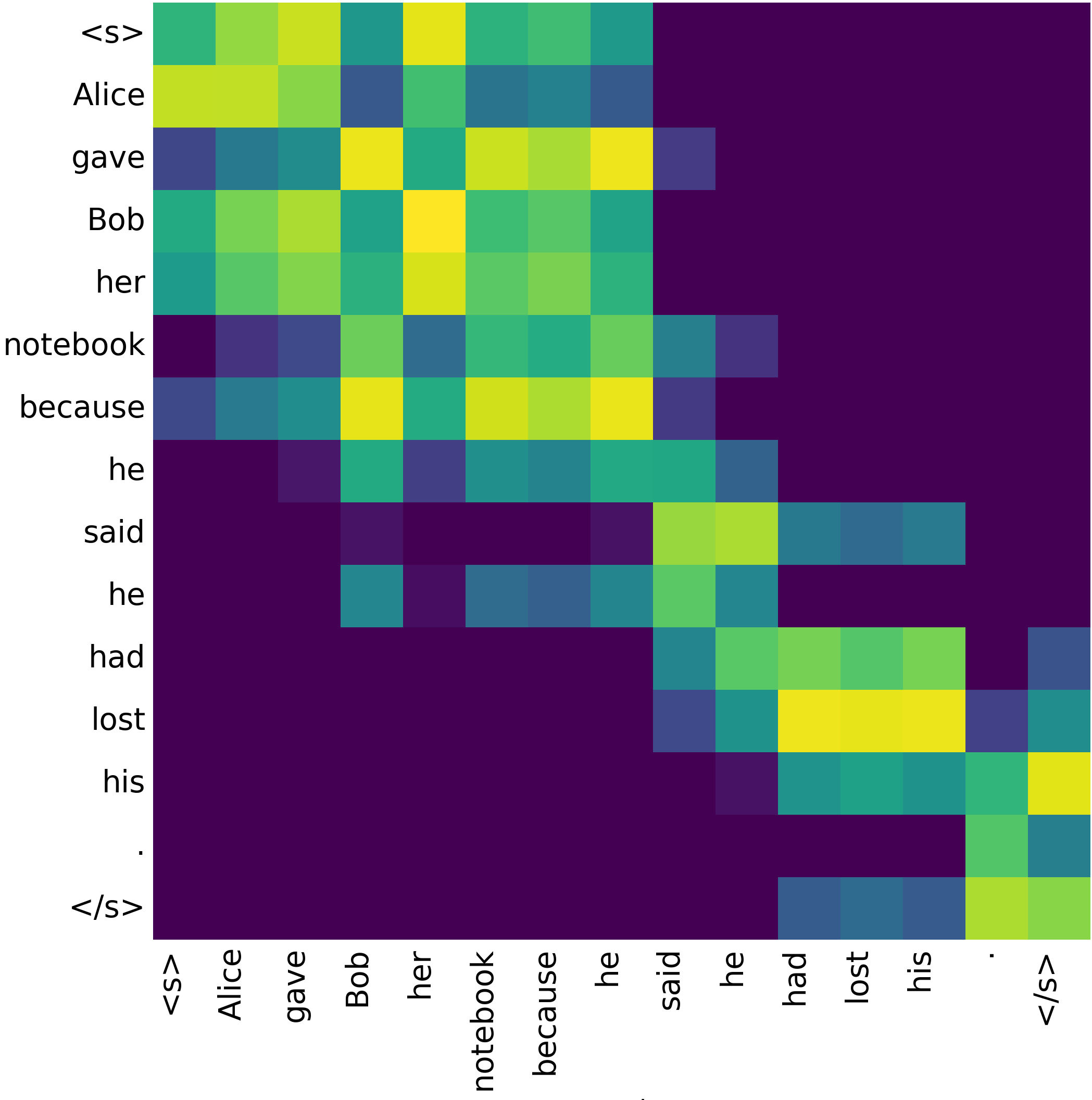}
        \caption{Block-structured attention.}
        \label{fig:attn_ex3}
    \end{subfigure}\hfill
    \begin{subfigure}[t]{0.49\linewidth}
        \centering
        \includegraphics[width=\linewidth]{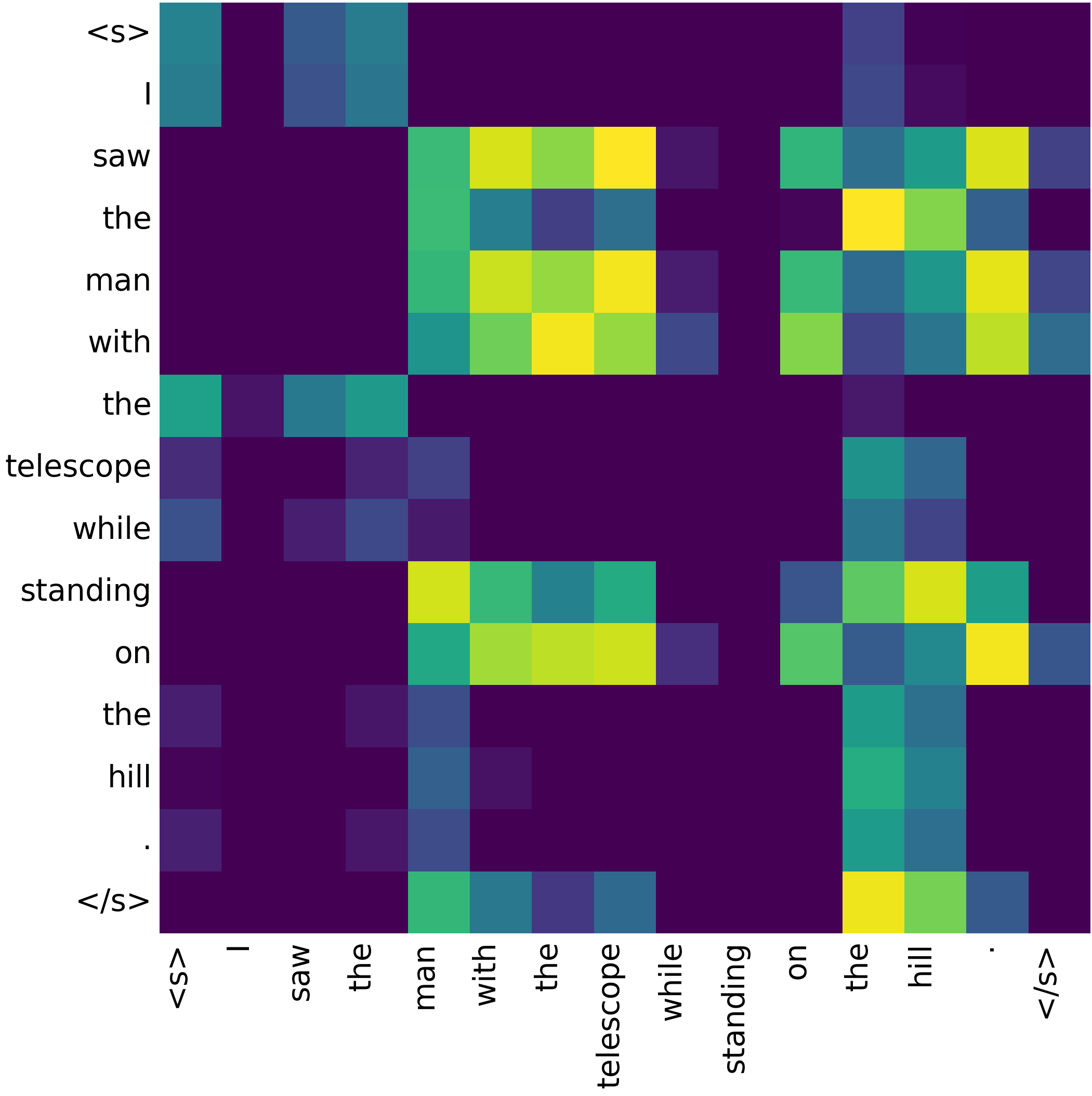}
        \caption{Multi-block attention.}
        \label{fig:attn_ex4}
    \end{subfigure}

    \caption{
    Representative attention maps produced by sliced ReLU-bump attention on natural language inputs, for a trained model (see \textsection \ref{subsub:mlm}).
    Each panel shows the attention weights of a single head and layer, illustrating distinct structural regimes.
    }
    \label{fig:relubump_attention_examples}
\end{figure}
\\
\textbf{Other kernel functions: }
A large class of piecewise linear functions can be expressed as a linear combination of ReLU functions. Although the ReLU attention is expressive enough to reproduce the ReLU bump or other such functions, the training dynamic might be facilitated in some situations by using a pre-defined shape of the kernel function. This is observed for the ReLU-bump, yet other kernel functions might be explored.


\section{Expressivity of ReLU transformers and ReLU attention}\label{sec:theory_expressivity}

In this section, we study the expressivity of the ReLU attention model. We first show that ReLU attention preserves the sequence-to-sequence expressive power of standard softmax Transformers, recovering the universal approximation result of \cite{yun2020transformersuniversalapproximatorssequencetosequence}. We then establish a second, more measure-theoretic result, showing that ReLU attention also satisfies contextual universal approximation properties over spaces of probability measures, following the framework of \cite{furuya2024transformersuniversalincontextlearners}.

\subsection{Expressivity of ReLU attention as sequence-to-sequence mapping}
In this paragraph, we prove that an attention-only architecture with ReLU attention is at least as contextually expressive as standard softmax attention, in the direction of \cite{yun2020transformersuniversalapproximatorssequencetosequence,geshkovski2024measuretomeasureinterpolationusingtransformers}. In contrast with the previous section, we focus on an architecture that contains \emph{only} attention layers, without MLP blocks.
 Given a list of $p$ source sequences and a list of $p$ target sequences, does there exist a discrete sequence of attention layers that maps each source sequence to the corresponding target sequence? Our goal is to study the expressivity of our new model of attention without using the MLP layers, parts of the standard transformer blocks. We start with our definition of contextual expressivity:
\begin{definition}[Contextual expressivity of attention]
   Given $p$ sequences $\mathbf x_i = (x_{k,i})_{k = 1,\ldots,n} \in \mathbb R^{dn}$ and $\mathbf y_i = (y_{k,i})_{k = 1,\ldots,n} \in \mathbb R^{dn}$ for $i= 1,\ldots,p$, the attention is transitive on groups of sequences if there exists a finite number $l$ of steps of attention maps $\mathcal{A}_k$ such that $\mathcal{A}_1 \circ \ldots \circ \mathcal{A}_l (\mathbf x_i )= \mathbf y_i$ for all $i=1,\ldots,p$.
\end{definition}

The ReLU kernel is as contextually expressive as the standard attention for this criterion: 
\begin{theorem}[ReLU attention is contextually expressive]\label{th:relu_att_seqtoseq}
    Let $d\geq 2$. Assume that within each sequence the $n$ tokens are pairwise distinct. Then ReLU attention is transitive on groups of sequences: for any $p$ source sequences and $p$ target sequences of length $n$ in $\R^d$, there exists a composition of at most $2p(n+1)-1$ ReLU–attention layers that maps each source sequence to its prescribed target without mixing different groups.
\end{theorem}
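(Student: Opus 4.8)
The plan is to prove transitivity by constructing a sequence of attention layers that processes one source–target pair at a time, moving each source sequence $\mathbf x_i$ to its target $\mathbf y_i$ while leaving the already-processed and not-yet-processed sequences in place (up to a cheap bookkeeping trick). The key structural fact I would exploit is the explicit form of ReLU attention in \cref{eq:relu_att_centered}: because the kernel acts on one-dimensional projections $\Pi Q x - \Pi K y$ of differences, and because the value map $V$ and projection $\Pi$ are free parameters, a single head can be tuned to act as a near-selector: by choosing $\Pi$, $Q$, $K$ so that the projected scores of one designated sequence are well-separated from all others, the attention weights can be concentrated to implement an approximately prescribed update on that sequence only. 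First I would make this separation rigorous using the hypothesis that the $n$ tokens within each sequence are pairwise distinct and $d\geq 2$: distinctness in $d\geq 2$ guarantees that a generic linear projection $\Pi$ separates all $pn$ relevant points into distinct real scores, so the sorted order is unambiguous and the ReLU interaction is well-defined and tunable.

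The main engine is a \emph{per-token displacement lemma}: I would show that with $O(1)$ attention layers one can move a single chosen token of a single chosen sequence to an arbitrary target position, without disturbing the other $n-1$ tokens of that sequence or any token of the other $p-1$ sequences. The mechanism is that each ReLU-attention layer outputs a weighted combination of (centered) value vectors, and an attention-only block acts as $\mathbf x \mapsto \mathbf x + \mathcal A(\mathbf x)$ (a residual update), so I can add a controlled displacement vector to one token. The budget $2p(n+1)-1$ strongly suggests the intended counting: roughly $n+1$ layers to handle each of the $p$ sequences (moving its $n$ tokens plus one synchronization/reset layer), with the factor $2$ accounting for a two-stage move (e.g. first relocating tokens to a scratch configuration where they are separated, then placing them at the targets) and the $-1$ from sharing the final layer. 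I would therefore organize the proof as: (i) a \textbf{separation step} choosing $\Pi$ so all scores are distinct and orderable; (ii) a \textbf{single-token relocation step} realized by a bounded number of ReLU layers; (iii) an \textbf{induction over the $p$ groups and $n$ tokens}, concatenating the per-step layers and tracking the total count to match $2p(n+1)-1$.

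The hard part will be guaranteeing \emph{non-interference}: since ReLU attention is global (every token attends to every token through the normalized kernel), moving one token generically perturbs all others, unlike idealized hard attention. I expect to handle this by using the $d\geq 2$ freedom to embed an auxiliary coordinate that acts as a ``tag'' separating sequences, and by exploiting the conditionally-positive-definite/centered structure (the zero-sum value constraint in \cref{eq:relu_att_centered}) so that the net displacement applied to unselected tokens cancels or can be made arbitrarily small. Concretely, I would argue that for any target tolerance one can choose parameters making the projected scores of the target sequence dominate the normalization, so the off-target contributions vanish in the limit and, by continuity/an explicit algebraic cancellation, the move is realized exactly rather than approximately; turning this limiting argument into an exact finite construction — rather than an $\eps$-approximation — is the delicate point, and is presumably where the precise layer count $2p(n+1)-1$ and the pairwise-distinctness hypothesis are consumed.
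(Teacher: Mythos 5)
Your proposal correctly identifies the overall shape (separate scores by a 1D projection, then relocate tokens one at a time, counting layers), but the step you yourself flag as "the delicate point" — turning approximate dominance of the normalized kernel into an exact move — is precisely where the argument breaks, and the paper resolves it with an idea your plan lacks. The paper does not work with the normalized, centered kernel of \cref{eq:relu_att_centered} and does not use any limiting or cancellation argument: its construction uses unnormalized residual ReLU attention with \emph{affine} query/key maps, and the key device is the ReLU bump of Lemma \ref{lemma:relu_bump}, $\phi_{x_0,\delta}(x) = \frac{1}{\delta}\left[\operatorname{ReLU}(x-x_0+\delta)+\operatorname{ReLU}(x-x_0-\delta)-2\operatorname{ReLU}(x-x_0)\right]$, which is \emph{compactly supported}. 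Realized by three heads whose key maps are constant, a single layer adds $r\,\phi_{c,\delta}(\langle \eta, x\rangle)$ to every token, which is \emph{exactly} zero on every token whose projection lies outside $(c-\delta,c+\delta)$. This gives exact selectivity with finitely many layers; by contrast, with the normalized kernel the off-target attention weights are generically nonzero for any finite parameter choice, no algebraic cancellation is identified (the centering/CPD structure does not provide one), and so your scheme proves at best $\varepsilon$-approximate transitivity, not the theorem.

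The second gap is that a generic projection making all input scores distinct does not handle degenerate source/target configurations, which the theorem permits: targets may coincide with sources as points of $\R^d$ (e.g.\ $\mathbf y_i$ a permutation of $\mathbf x_i$, or one token's target equal to another token's current position, possibly in a different sequence). No projection can separate coinciding points, so moving tokens "one at a time" directly to their targets can make two tokens of the same sequence coincide, after which no attention layer can ever separate them (coincident tokens receive identical updates forever). The paper's answer is the 1D disentanglement phase (Lemmas \ref{lemma:1D_att_split_left} and \ref{lemma:disentanglement_1d}): $2p-1$ layers, built from a splitting lemma using the lexicographic order and affine contractions that act as the identity on a "parked" compact set, first move all sequences into disjoint intervals strictly beyond a threshold $t$ chosen past every target projection; only then do $2pn$ bump layers perform the matching ($pn$ for the $\eta^\perp$ components, which preserve all projections, then $pn$ for the $\eta$ components). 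This also explains the count $2p-1+2pn = 2p(n+1)-1$, which has a different decomposition than your "two-stage move with a shared final layer" reading.
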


Note that our result is stronger than the expressivity result in \cite{geshkovski2024measuretomeasureinterpolationusingtransformers} since it has been proven that classical attention is transitive on groups of \emph{measures} instead of groups of sequences. Indeed, we can match labeled points, whereas \cite{geshkovski2024measuretomeasureinterpolationusingtransformers} matches $\mathbf x_i$ to $\mathbf y_i$ up to a permutation (depending on $i$) of the $n$ points. Below we present our proof strategy, and refer to Appendix \ref{app:proof_th_relu_att_seqtoseq} for a detailed version.
The key step for proving this result is the following disentanglement lemma, similar to \cite{yun2020transformersuniversalapproximatorssequencetosequence}, and proven in Appendix  \ref{app:proof_th_relu_att_seqtoseq}.
\begin{lemma}[Disentanglement in 1D]\label{lemma:disentanglement_1d}
 Consider $p$ sets of $n$ distinct points $\mathbf x_i$ in  dimension $1$. There exist $2p-1$ attention steps and $p$ disjoint intervals $I_i$ such that after the composition of the $2p-1$ ReLU attention layers, one has $x_{i,k} \in I_i$.
\end{lemma}

\begin{proof}[Sketch of proof of Theorem \ref{th:relu_att_seqtoseq}]
    The proof proceeds in three steps and relies on the fact that ReLU attention can act locally and selectively on individual tokens.
    
    First, we show that, in dimension one, ReLU attention can separate any family of sequences into disjoint intervals. A "splitting lemma" \ref{lemma:1D_att_split_left} allows us to iteratively disentangle the $p$ source sequences so that they occupy disjoint regions on a chosen line. This fact is proven in Lemma \ref{lemma:disentanglement_1d}.

    Second, for higher dimensional tokens, we pick a direction $\R \eta$, and use the previous result to disentangle the sequence along this specific line.
    
    Finally, once the sequences are separated, we use localised ReLU "bumps" (see Lemma \ref{lemma:relu_bump}) to adjust their orthogonal components independently: each token can be moved in the hyperplane $\eta^\perp$, without affecting their position on the line $\R \eta$. We then correct the remaining coordinates along the separation direction $\R \eta$, using once again specific ReLU updates.

    Each step can be implemented by a small number of attention layers, yielding a total bound of at most $2p(n+1) - 1$ layers.
\end{proof}
 Although the last two phases of the construction (matching orthogonal components and matching along $\R \eta)$ can be done in parallel inside a single wide attention head, we adopt a sequential construction to keep the per-layer width independent of $n$ and $p$. This yields a deeper but width-efficient architecture.

\subsection{Measure-theoretic expressivity and universality of sliced ReLU attention}\label{subsec:measure_approx_transfo}

We now show that ReLU attention also retains the measure-theoretic universal approximation properties established for softmax-based Transformers in the mean-field limit. This viewpoint follows the framework of contextual mappings introduced in \cite{furuya2024transformersuniversalincontextlearners}.

\subsubsection{Measure-theoretic attention}

To study expressivity independently of the number of tokens, we adopt the mean-field formulation of \cite{furuya2024transformersuniversalincontextlearners}. Given a probability measure $\mu$ on a domain $\Omega \subset \R^d$, an attention layer with parameters $(Q,K,V)$ defines a contextual update:
\begin{small}
\begin{equation}\label{eq:mean_field_attention}
    \Gamma_{\theta}(x,\mu) \coloneqq x + \sum_{h=1}^H W^h \int \frac{ S(Q^h x, K^h y)}{\int N(Q^hx,K^hz)d\mu(z)} V^h y d\mu(y) \,,
\end{equation}
\end{small}
where $S$ is a similarity or relation function, and $N$ is a normalization function. For an empirical measure $\mu = \frac{1}{n}\sum_i \delta_{x_i}$, this recovers the standard multi-head attention acting on token sequences. 
 In the standard softmax case, the similarity function is $S^{\text{sftm}}(Qx,Ky) \coloneqq \exp{\left(\frac{\langle Qx, Ky \rangle}{\sqrt{d}}\right)}$ and $N^{\text{sftm}} = S^{\text{sftm}}$. For our sliced ReLU attention, $S_{\Pi}^{\text{ReLU}}(Qx,Ky) \coloneqq \operatorname{ReLU}(\Pi Qx - \Pi Ky)$ and $N_{\Pi}^{\text{ReLU}}(Qx,Ky) = |\Pi Qx - \Pi Ky|$.
A Transformer layer consists of an attention update $\Gamma_\theta$ followed by a pointwise MLP $F_\xi$. Therefore, a depth-L model induces the contextual mapping
\begin{equation}
    F_{\xi_L} \diamond \Gamma_{\theta_L} \diamond \cdots \diamond F_{\xi_1} \diamond \Gamma_{\theta_1} : \Omega \times \mathcal{P}(\Omega) \to \R  \,.
\end{equation}

A more detailed derivation of this formulation from standard token-based multi-head attention is provided in Appendix \ref{app:background_mean_field_transformers}.

\subsubsection{Universality of the ReLU transformers.}

To characterize expressivity in a way that does not depend on the number of tokens, we adopt the mean-field viewpoint described above. In this setting, expressivity amounts to universal approximation over the space of probability measures. We show that the universality result in Th. 1 in \cite{furuya2024transformersuniversalincontextlearners} still holds in the case of our sliced $\operatorname{ReLU}$ attention model:
\begin{proposition}\label{prop:transf_universality_measure}
    Let $\Omega \subset \R^d$ be a compact domain and $\Lambda^* : \Omega \times \mathcal{P}(\Omega) \mapsto \R^{d'}$ be a continuous function for the $l_2 \times$ weak$^*$ topology. Then, for all $\varepsilon > 0$, there exists a depth $L$ and parameters $(\xi_l,\theta_l)_{1 \leq l \leq L}$ such that $\forall x,\mu \in  \Omega \times \mathcal{P}(\Omega)$,
    \begin{equation}
         | F_{\xi_L} \diamond \Gamma_{\theta_L} \diamond \ldots \diamond F_{\xi_1} \diamond \Gamma_{\theta_1}(x,\mu) - \Lambda^*(x,\mu)| \leq \varepsilon \,,
    \end{equation}
    with $d_{in}(\theta_l) \leq d+3d'$, $d_{head}(\theta_l) = 1$, $H(\theta_l) = d'$.
\end{proposition}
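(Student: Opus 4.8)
The plan is to follow the architecture of the softmax universality proof of \cite[Theorem 1]{furuya2024transformersuniversalincontextlearners} and to isolate the single place where the similarity and normalization functions $S,N$ enter the argument: a lemma stating that one attention update can approximate, uniformly over $(x,\mu)$, a prescribed first-moment functional of $\mu$. Once this lemma is re-established for the sliced ReLU pair $S_\Pi^{\operatorname{ReLU}}, N_\Pi^{\operatorname{ReLU}}$, the remaining assembly (moment extraction across depth, Stone--Weierstrass density, pointwise MLP approximation, and the width and head bookkeeping) is identical to the reference and can be invoked directly. Throughout we use that $\Omega$ may be taken compact, so that every pushforward $(\Pi K)_*\mu$ is supported in a fixed bounded interval of $\R$, uniformly in $\mu$.

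The building block I would establish is: for a fixed linear map $V$, a single head can approximate $(x,\mu)\mapsto x + W\!\int V y\, d\mu(y)$. The mechanism is a large-bias linearization of the kernel. Choosing the affine part of $Q$ so that $\Pi Q x \equiv c$ for a large constant $c$ exceeding $\sup\{\Pi K y : y\in\Omega\}$, the Energy-Distance identity \eqref{eq:relu_is_ed} gives $\operatorname{ReLU}(c-\Pi K y)=c-\Pi K y=|c-\Pi K y|$ on the support, since the argument stays positive. The attention ratio then collapses to
\begin{equation}
    W\,\frac{\int (c-\Pi K y)\,V y\, d\mu(y)}{\int (c-\Pi K z)\, d\mu(z)} = W\,\frac{c\,m_1 - m_2}{c - s}\,,
\end{equation}
with $m_1=\int Vy\,d\mu$, $m_2=\int (\Pi K y)\,Vy\, d\mu$ and $s=\int \Pi K z\, d\mu$, all uniformly bounded over $\mathcal{P}(\Omega)$. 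Since the error equals $W\,\frac{m_1 s - m_2}{c-s}$, letting $c\to\infty$ yields $W m_1 = W\!\int Vy\,d\mu(y)$ at rate $O(1/c)$ with constants independent of $\mu$. Precomposing this head with a pointwise MLP $F_\xi$ (supplied by the previous layer) turns $y\mapsto Vy$ into an arbitrary continuous field $\phi$, so the realizable building block is the general linear moment $\mu\mapsto\int\phi\, d\mu$, while the residual connection retains the per-token coordinate $x$ so that the final readout can depend on both arguments of $\Lambda^*$.

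From here the global construction is that of \cite{furuya2024transformersuniversalincontextlearners}: the family $\{\mu\mapsto\int\phi\,d\mu : \phi\in C(\Omega)\}$ separates points of $\mathcal{P}(\Omega)$, hence by Stone--Weierstrass the continuous functions of finitely many such moments are dense in $C(\Omega\times\mathcal{P}(\Omega))$ for the $l_2\times$ weak$^*$ topology; the transformer realizes a growing collection of moments and their nonlinear recombination by alternating the moment-extraction heads with the universal pointwise maps $F_\xi$, refining the approximation as the depth $L$ increases. The scalar slicing forces $d_{head}=1$, one head is devoted to each of the $d'$ output coordinates giving $H=d'$, and the $3d'$ auxiliary coordinates beyond the $d$ input ones store the running accumulator, normalizer, and a scratch register exactly as in the reference, yielding $d_{in}\le d+3d'$.

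The main obstacle is the denominator $\int |\Pi Q x - \Pi K z|\, d\mu(z)$: unlike softmax, the sliced ReLU normalization is not self-normalizing, is genuinely $\mu$-dependent, and could a priori vanish or oscillate across measures. The large-bias trick is precisely what tames it, by forcing the argument $\Pi Q x - \Pi K y$ to keep a constant positive sign on the support so that numerator and denominator share the same affine profile; the only real work is then the uniform-in-$\mu$ error estimate for the ratio above, which reduces to the uniform bound on the support of $(\Pi K)_*\mu$ granted by compactness of $\Omega$.
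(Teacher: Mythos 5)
Your strategy is sound, but it takes a genuinely different route from the paper's. The paper's proof (Proposition \ref{ThDenseAlgebra} in Appendix \ref{AppendixFirstProof}) keeps the ReLU nonlinearity inside its generators: the base algebra consists of sums of pointwise products of elementary maps $\gamma_\lambda(x,\mu)=\langle x,a\rangle+b+\int v\operatorname{ReLU}(\langle x,a\rangle-\langle y,a\rangle+c)\,d\mu(y)$, each realized \emph{exactly} by one scalar attention head composed with an \emph{affine} MLP, and separation of measures is proved by differentiating $c\mapsto\int\operatorname{ReLU}(\langle a,x-y\rangle+c)\,d\mu(y)$ to recover the cumulative distribution function of the projection of $\mu$ along $a$, then invoking injectivity of the Radon transform. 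You instead flatten the kernel by a large constant query score $c$, so that one head computes the plain mean $\int Vy\,d\mu(y)$ up to an $O(1/c)$ error uniform in $\mu$; your base algebra is generated by linear moments $\int\phi\,d\mu$ of MLP-realizable fields $\phi$, and separation is immediate from Riesz duality, i.e.\ from the definition of the weak$^*$ topology. Both are valid Stone--Weierstrass arguments, and both then inherit the product-via-MLP and register bookkeeping of \cite{furuya2024transformersuniversalincontextlearners}. As for what each buys: your separation step is more elementary, and you confront the normalization $\int|\Pi Qx-\Pi Kz|\,d\mu(z)$ explicitly --- a real merit, since the paper's appendix expression for $\Gamma_\theta\diamond F_\xi$ drops this denominator relative to the definition of $\Gamma_\theta$ in \eqref{eq:mean_field_attention} --- while the paper's route has exact generators (no large-$c$ limit to control), needs only affine MLPs at the generating stage, and credits the contextual expressivity to the ReLU kernel itself rather than to MLP universality, on which your construction leans heavily (realizing an arbitrary $\phi\in C(\Omega)$ with the pointwise layers costs extra depth, consistent with the statement but worth making explicit). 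Two refinements to your write-up: if you take the key projection constant (e.g.\ $K=0$), then both $S$ and $N$ become constant in $y$ and the head computes $\int Vy\,d\mu(y)$ \emph{exactly}, so the $c\to\infty$ limit can be dispensed with entirely; and ``invoked directly'' overstates the reuse of the reference's assembly, which realizes \emph{its} generators rather than yours --- the adaptation is routine, and your register count ($d_{in}\le d+3d'$, $d_{head}=1$, $H=d'$) does match, but the accumulation scheme must be restated for moment generators rather than cited verbatim.
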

Following the approach of \cite{furuya2024transformersuniversalincontextlearners}, the proof proceeds by identifying a family of simple one-dimensional attention–MLP compositions that generate a dense algebra of contextual mappings. The key idea is that a single ReLU-based attention head, combined with an affine MLP, can implement elementary functions of the form
\begin{equation}
    \gamma_{\lambda}(x,\mu) \coloneqq \langle x,a \rangle + b + \int v\operatorname{ReLU}(\langle x,a \rangle - \langle y,a \rangle + c)d\mu(y) \,,
\end{equation}
which act as building blocks for more complex mappings. Finite sums and pointwise products of these elementary maps generate an algebra that is dense in the space of continuous functions on $\Omega \times \mathcal{P}(\Omega) $. Once this density is established (see Appendix \ref{AppendixFirstProof}), the universal approximation result follows exactly as in \cite{furuya2024transformersuniversalincontextlearners}, with only the base algebra replaced.

 In the same way as in \cite{furuya2024transformersuniversalincontextlearners}, the approximation is independent of the number of tokens $n$, and the number of heads and hidden dimension do not depend upon $\varepsilon$ (only the depth does). 
 The key point of the proof is the algebra structure, which can be approximated by the MLPs.

\begin{table*}[t]
\centering
\begin{tabular}{l|ccccc|c}
\toprule
\textbf{Model} & \textbf{ListOps} & \textbf{Text} & \textbf{Retrieval} & \textbf{Image} & \textbf{Pathfinder} & \textbf{Avg} \\
\midrule
Softmax       & \textbf{39.3} & \textbf{66.8} & 76.5 & \textbf{44.2} & 72.2 & 59.8 \\
Sliced ReLU   & 37.3 & 64.3 & \textbf{79.5} & 43.8 & \textbf{89.9} & \textbf{62.9} \\
\bottomrule
\end{tabular}
\caption{Comparison between softmax attention and sliced ReLU attention on the LRA benchmark (accuracy, higher is better).}
\label{tab:lra_results_alt}
\end{table*}

\begin{table*}[t]
\centering
\footnotesize
\setlength{\tabcolsep}{4pt}
\begin{tabular}{l|cc|cc|cc|cc|cc}
\toprule
& \multicolumn{2}{c|}{\textbf{ListOps (2K)}}
& \multicolumn{2}{c|}{\textbf{Text (4K)}}
& \multicolumn{2}{c|}{\textbf{Retrieval (2×4K)}}
& \multicolumn{2}{c|}{\textbf{Image (1K)}}
& \multicolumn{2}{c}{\textbf{Pathfinder (1K)}} \\
\textbf{Model}
& \textit{Inf.} & \textit{Train}
& \textit{Inf.} & \textit{Train}
& \textit{Inf.} & \textit{Train}
& \textit{Inf.} & \textit{Train}
& \textit{Inf.} & \textit{Train} \\
\midrule
Softmax
& 140.1 & 61.5
& 30.2 & 13.6
& 20.8 & 7.25
& \textbf{552.4} & \textbf{202.9}
& 678.8 & 223.2 \\

Sliced ReLU
& \textbf{202.2} & \textbf{76.7}
& \textbf{120.4} & \textbf{34.2}
& \textbf{55.8} & \textbf{23.0}
& 372.1 & 153.2
& \textbf{716.2} & \textbf{301.1} \\
\bottomrule
\end{tabular}
\caption{Per-task throughput (samples/s per GPU, higher is better) for softmax and sliced ReLU attention on LRA, measured on 16GB V100 GPUs. To match the same effective batch size per update, the softmax baseline required gradient accumulation on long-sequence tasks (e.g., ListOps, Retrieval, Pathfinder), often resulting in very small per-GPU batches, while sliced ReLU generally fits larger batches with little or no accumulation. Throughput is normalized per GPU and reported as measured.}
\label{tab:lra_speed}
\end{table*}

\section{Experiments.}
\label{SecExperiments}
While our contribution is primarily theoretical, we conduct small-scale experiments with sliced ReLU attention due to computational constraints. These preliminary results suggest that this model merits further exploration: the ReLU-bump exhibits comparable performances with the softmax in several tasks, while the long-range interaction given by sliced ReLU may be beneficial for different tasks.

\subsection{Long Range Arena benchmark}

To test the behavior of our model on long-context tasks, we conducted experiments on the Long Range Arena (LRA) benchmark \cite{tay2020longrangearenabenchmark}, a list of tasks specifically designed to test the robustness and efficiency of Transformer variants under long input sequences. Since our architecture is implemented in PyTorch, we did not use the official LRA codebase, but instead reimplemented all tasks from the publicly available archived version of the benchmark. For consistency, we reproduced the preprocessing pipelines and task formulations, while using our own training setup and architectural components, such as pre-layer normalization \cite{xiong2020layernormalizationtransformerarchitecture}, RoPE embeddings \cite{su2023roformerenhancedtransformerrotary} in some configurations, and model sizes differing from those in the original benchmarks. Both the standard softmax model and our sliced ReLU attention model were reimplemented within the same framework to ensure a fair comparison. We evaluate on the standard LRA tasks, including LongListops, byte-level text classification, byte-level document retrieval (AAN), pixel-level image classification on the CIFAR-10 dataset\cite{Krizhevsky2009LearningML}, and the Pathfinder task. All tasks used mean-pooling. For positional embeddings, we used RoPE for ListOps, text classification, and retrieval, and a learned positional embedding for pixel-level image classification and Pathfinder. Results are presented in Table \ref{tab:lra_results_alt}, with training and inference speed in Table \ref{tab:lra_speed}. Hyperparameters are written in Table \ref{tab:lra_hparams_tasks}. All trainings were done on two 16Gb V100 GPUs.

\begin{table}[t]
\centering
\footnotesize
\setlength{\tabcolsep}{5pt}
\begin{tabular}{r|ccc}
\toprule
\textbf{Model} & \textbf{Softmax} & \textbf{ReLU-bump} & \textbf{ReLU} \\
\midrule
ModelNet40 (Acc.~\%) & \textbf{86.3} & 85.4 & 76.2 \\
\bottomrule
\end{tabular}
\caption{Classification accuracy on ModelNet40 (higher is better).}
\label{tab:modelnet_results}
\end{table}

\begin{table*}[t]
\centering
\small
\setlength{\tabcolsep}{4pt}
\begin{tabular}{l|c|ccccc}
\toprule
\textbf{Model} 
& \textbf{MLM} 
& \textbf{STS-B} 
& \textbf{QQP} 
& \textbf{MNLI-m} 
& \textbf{QNLI} 
& \textbf{SST-2} \\
\midrule
Softmax 
& 1.29 
& 87.2 
& 89.8 
& 86.3 
& 91.3 
& 92.9 \\
Sliced ReLU-bump 
& 1.63 
& 81.2 
& 87.7 
& 74.3 
& 80.7 
& 84.5 \\
\bottomrule
\end{tabular}
\caption{Masked language model (MLM) evaluation loss (cross-entropy) after 75k pretraining steps, followed by downstream GLUE performance. STS-B reports Spearman correlation; other tasks report accuracy.}
\label{tab:mlm_glue}
\end{table*}

\subsection{Tiny ViT experiment}

We evaluate the feasibility of the proposed sliced ReLU attention on a low-capacity Vision Transformer (ViT) trained on CIFAR-10 and Tiny Imagenet. A tiny ViT architecture with a CNN-based patch embedding is used for all three variants:\footnote{Standard implementations of linear attention failed to match the performance of the other methods.} S tandard softmax attention, sliced ReLU (with linear or nonlinear projections), and sliced ReLU bump:
The sliced ReLU and sliced ReLU bump models contain approximately $4.54$M parameters, similar to the softmax baseline. All training conditions, including a patch size of $4$, optimizer (AdamW), data augmentation (mixup and cutmix) are kept identical across models. 
For each setup, we selected the best hyperparameters and showed the median over four runs.
The evolution of the validation accuracy is shown in Fig.~\ref{FigAccuracyPlotViT}.
\begin{figure}[ht!]
    \centering
    \begin{subfigure}[t]{0.50\textwidth}
        \centering
        \includegraphics[width=\linewidth]{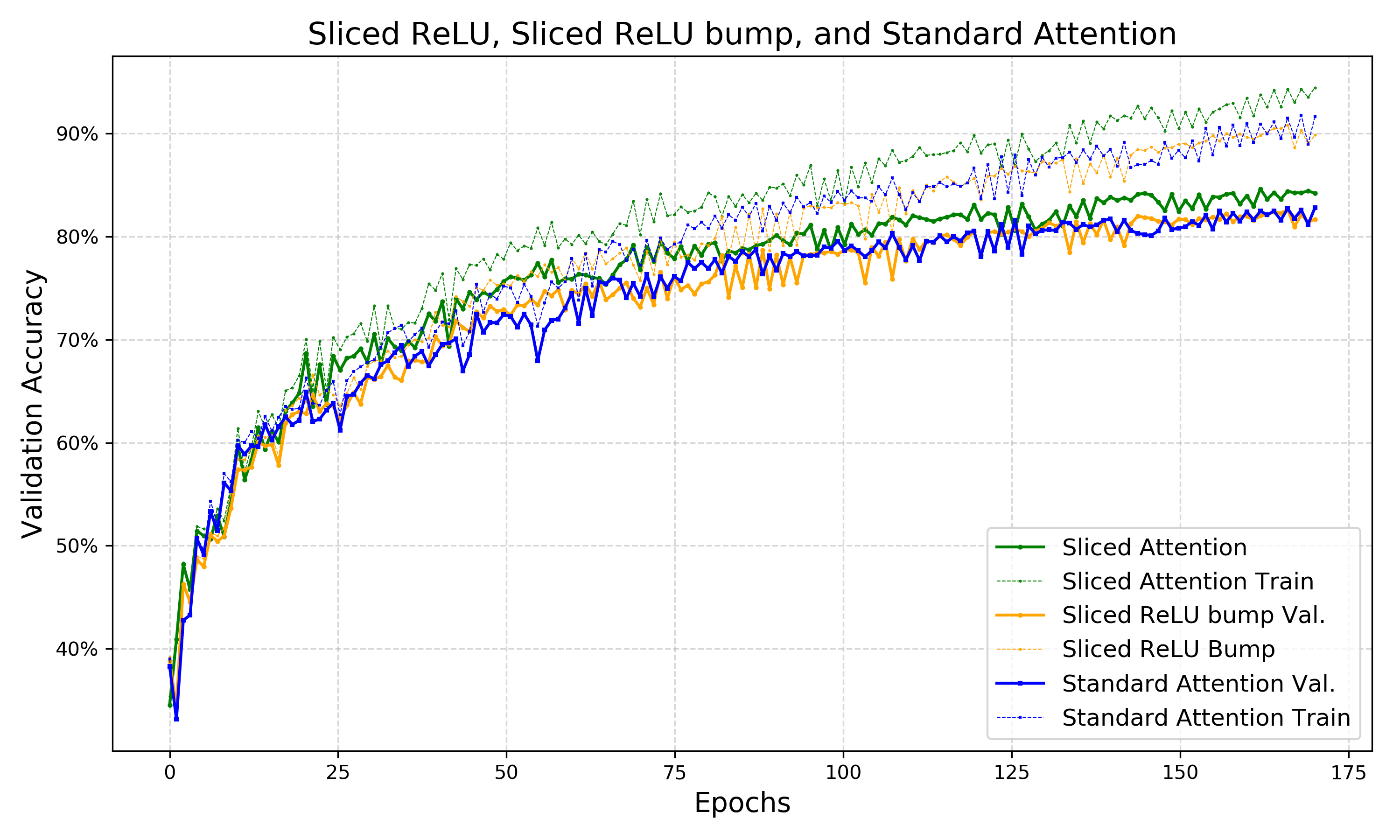}
        \caption{On CIFAR-10, sliced ReLU bump and softmax attention perform equally, and are slightly outperformed by sliced ReLU.}
        \label{fig:subfig1}
    \end{subfigure}
    \hfill
    \begin{subfigure}[t]{0.50\textwidth}
        \centering
        \includegraphics[width=\linewidth]{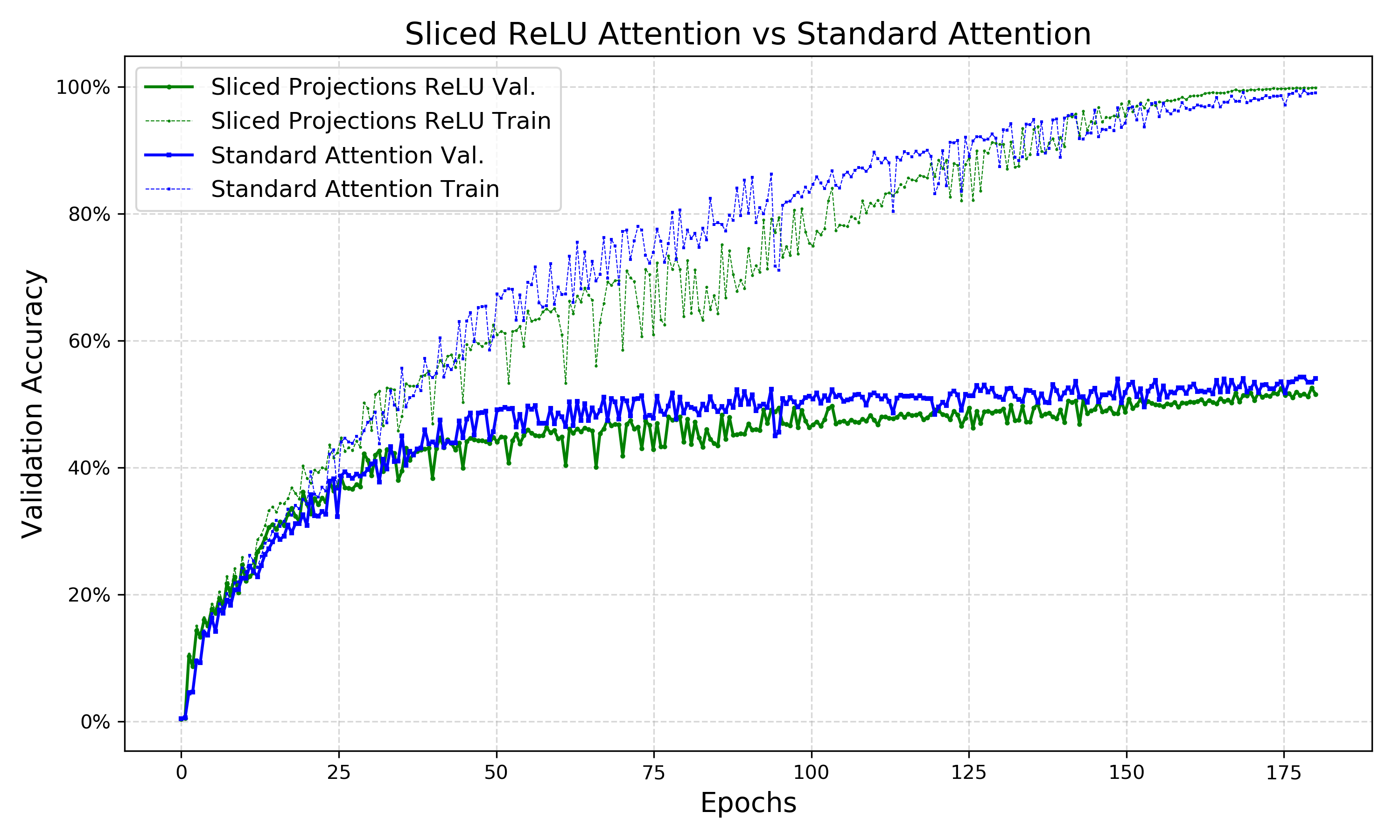}
        \caption{On Tiny Imagenet, standard attention outperforms sliced attention.}
        \label{fig:subfig2}
    \end{subfigure}

    \caption{Accuracy plots for the different attention models. Each curve represents two training runs.}
    \label{FigAccuracyPlotViT}
\end{figure}
As illustrated in Fig.~\ref{fig:subfig1}, sliced ReLU attention consistently outperforms the sliced ReLU-bump variant. On CIFAR-10, sliced attention slightly outperforms standard attention, suggesting improved inductive bias and regularization. On Tiny ImageNet, standard attention exhibits better generalization than sliced attention, achieving higher test performance while both methods fit the training data equally well. This observation suggests that incorporating stronger regularization may improve sliced attention in future work.
Other ablation experiments, including slicing the standard softmax attention, are included in Appendix \ref{AppendixTinyViT}. 

\subsection{ModelNet40 classification, PointCloud Transformers}
To assess the geometric behavior of our attention mechanism, we evaluate it on 3D point-cloud classification using the ModelNet40 dataset \cite{Wu2015}, which contains 40 object categories represented as raw point sets. We follow the Point Cloud Transformer framework \cite{Guo2021PCT} but intentionally adopt a minimal backbone: we remove neighborhood grouping, farthest-point sampling, and hierarchical blocks, retaining only the core point-wise Transformer layers. This setup isolates the effect of the attention kernel itself, without additional geometric machinery. We apply standard data augmentations (anisotropic scaling, random translations, in-plane rotations, point dropout) and compare softmax attention, our ReLU-bump kernel, and the simpler sliced ReLU kernel. In this controlled setting, the ReLU-bump kernel consistently outperforms the plain ReLU variant and closely matches softmax attention. This indicates that the bump formulation captures localized geometric structure more effectively, while still benefiting from the global and efficient sliced construction. Results are reported in Table \ref{tab:modelnet_results}.

\subsection{Bidirectional Language Model}\label{subsub:mlm}

To further evaluate our approach on large-scale natural language processing tasks, we consider a bidirectional Transformer language model pretrained using masked language modeling.
Our goal is not to optimize for state-of-the-art performance, but to assess whether sliced ReLU-bump attention supports standard large-scale pretraining and fine-tuning procedures on non-trivial NLP tasks.
We evaluate the resulting models on downstream tasks from a subset of the GLUE benchmark.

\subsubsection{Masked Language Modeling Pretraining}

We pretrain an encoder-only Transformer model using the masked language modeling (MLM) objective introduced in \cite{devlin2019bertpretrainingdeepbidirectional}, where the model is trained to predict randomly masked tokens from their surrounding context.
To evaluate the impact of the attention mechanism, we train two models with identical architectures, differing only in the attention formulation: one using standard softmax attention, and the other using our sliced ReLU-bump attention.
\\
\textbf{Model architecture:} 
Following the recommendations of \cite{izsak2021trainbertacademicbudget}, we train large Transformer models with 24 layers, a hidden dimension of 1024, and an intermediate dimension of 4096.
The softmax attention variant uses 16 attention heads, while the sliced ReLU-bump variant uses 64 heads, as selected based on a preliminary ablation study.
We use rotary positional embeddings (RoPE) \cite{su2023roformerenhancedtransformerrotary}, following \cite{warner2024smarterbetterfasterlonger}.
All models adopt a pre-normalization Transformer architecture \cite{xiong2020layernormalizationtransformerarchitecture}.
\\
\textbf{Dataset: }
Pretraining is performed on a 160\,GB English corpus composed of a mixture of FineWeb \cite{penedo2024finewebdatasetsdecantingweb}, Wikipedia, and BookCorpus.
\\
\textbf{Training setup: }
Models are trained for 75k optimization steps with a global batch size of 1024 and a sequence length of 512 tokens.
We use the byte-level RoBERTa tokenizer \cite{liu2019robertarobustlyoptimizedbert} with a masking ratio of 20\%. The masked language modeling objective is implemented using a standard cross-entropy loss over the masked tokens.
Optimization is performed using AdamW \cite{loshchilov2019decoupledweightdecayregularization} with parameters $\beta_1 = 0.9$, $\beta_2 = 0.98$, $\varepsilon = 1e-6$, and a weight decay of 0.01. We use a cosine decay scheduler, with a peak learning rate of 5e-4. We use a dropout rate of 0.1 throughout the model, and an attention dropout of 0.05.

In Table~\ref{tab:mlm_glue}, we report the final masked language modeling evaluation loss for both attention variants.
As expected, the softmax attention model achieves a lower loss.
Nevertheless, the sliced ReLU-bump model reaches a non-trivial MLM loss, indicating successful large-scale pretraining under identical optimization and data settings. The observed losses are consistent with those reported for reasonably trained BERT-style models under limited training budgets \cite{izsak2021trainbertacademicbudget}.

In Figure~\ref{fig:relubump_attention_examples}, we illustrate representative attention patterns produced by the trained model. Additional visualizations are provided in Appendix~\ref{app:visu_band}.

\subsubsection{Downstream Evaluation on GLUE}

We test the performances of our pretrained models on the GLUE benchmark \cite{gluedataset_2018}, covering sentence similarity, paraphrase detection, sentiment classification, and entailment. Fine-tuning is performed independently for each task using standard task-specific heads. For each task and each attention variant, we perform a small grid search across batch size and learning rate and report the best result. Table~\ref{tab:mlm_glue} reports downstream performance for both attention variants.
Standard softmax attention achieves higher performance across all evaluated tasks, with performance gaps that vary across benchmarks and remain smaller on semantic similarity and paraphrase detection. Despite this gap, the sliced ReLU-bump model attains consistent and non-trivial downstream performance across all tasks, indicating that it learns transferable linguistic representations and supports standard fine-tuning procedures following large-scale masked language model pretraining. Evaluating sliced ReLU attention on substantially longer-context tasks, where its quasi-linear complexity provides clearer computational benefits, is left for future work.


\section*{Impact Statement}
By reducing the cost of long-range attention, our method can decrease energy consumption associated with training and inference, contributing to more sustainable ML practices. The societal impact of enabling larger models depends on their specific uses and governance.

\bibliography{biblio}
\bibliographystyle{icml2026}

\newpage
\appendix
\onecolumn

\section{ReLU as an asymmetric CND kernel}\label{app:cpd_relu}

For completeness, we provide here the formal statement and proof that the 
kernel $(x,y) \mapsto \operatorname{ReLU}(x-y)$ is conditionally positive definite
of order 1 on $\mathbb{R}$. This property is used in Section~\ref{sec:model}
to justify the value-centering strategy for sliced ReLU attention.

\begin{proposition}
    Let $x_1, \ldots, x_n \in \R$ be $n$ distinct points and $\gamma_1, \ldots, \gamma_n \in \R$ such that $\sum_i \gamma_i = 0$.
    Then:
    \begin{equation}
        -\sum_{1\leq i,j \leq n } \gamma_i \gamma_j \operatorname{ReLU}(x_i - x_j) \geq 0 \,,
    \end{equation}
    with equality if and only if $\gamma_1 = \ldots = \gamma_n = 0$.
\end{proposition}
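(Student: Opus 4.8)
The plan is to exploit the elementary decomposition \eqref{eq:relu_is_ed}, which writes $\operatorname{ReLU}(x-y)$ as the sum of a symmetric part $\tfrac12|x-y|$ and an antisymmetric part $\tfrac12(x-y)$, and to show that the zero-sum constraint annihilates the antisymmetric contribution. Concretely, I would first compute
\begin{equation}
    \sum_{1\leq i,j\leq n}\gamma_i\gamma_j\,(x_i-x_j)
    = \Big(\sum_i \gamma_i x_i\Big)\Big(\sum_j \gamma_j\Big) - \Big(\sum_i \gamma_i\Big)\Big(\sum_j \gamma_j x_j\Big) = 0,
\end{equation}
where both products vanish because $\sum_i\gamma_i=0$. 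This reduces the claim to the conditional negative definiteness of the distance kernel: it now suffices to prove that $-\sum_{i,j}\gamma_i\gamma_j|x_i-x_j|\geq 0$, since then $-\sum_{i,j}\gamma_i\gamma_j\operatorname{ReLU}(x_i-x_j)=-\tfrac12\sum_{i,j}\gamma_i\gamma_j|x_i-x_j|$.

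The second step is to establish the nonnegativity of the distance term via the one-dimensional integral representation
\begin{equation}
    |x-y| = \int_{\R}\big(\mathbf{1}_{\{t<x\}}-\mathbf{1}_{\{t<y\}}\big)^2\,dt,
\end{equation}
which holds because the integrand equals $1$ exactly on the interval between $x$ and $y$. Substituting this and expanding the square, using $\mathbf{1}^2=\mathbf{1}$, I would introduce the step function $g(t)\coloneqq\sum_i\gamma_i\,\mathbf{1}_{\{t<x_i\}}$ and obtain
\begin{equation}
    \sum_{i,j}\gamma_i\gamma_j|x_i-x_j| = \int_{\R}\Big(\sum_i\gamma_i\mathbf{1}_{\{t<x_i\}} + \sum_j\gamma_j\mathbf{1}_{\{t<x_j\}} - 2g(t)^2\Big)\,dt = -2\int_{\R} g(t)^2\,dt,
\end{equation}
where the two linear terms vanish after factoring out $\sum_i\gamma_i=0$. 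This yields the clean formula $-\sum_{i,j}\gamma_i\gamma_j\operatorname{ReLU}(x_i-x_j)=\int_{\R}g(t)^2\,dt\geq 0$, proving the inequality.

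For the equality case, I would argue from the structure of $g$. Equality forces $g\equiv 0$ almost everywhere; but $g$ is piecewise constant with a jump of size $-\gamma_i$ precisely at each point $x_i$, and since the $x_i$ are pairwise distinct these jumps occur at distinct locations and cannot cancel. Hence $g\equiv 0$ a.e.\ forces every jump to vanish, i.e.\ $\gamma_i=0$ for all $i$. The main subtlety to handle carefully is this equality analysis: one must verify rigorously that the distinctness of the $x_i$ makes the jumps of $g$ independent, so that vanishing of the function implies vanishing of each coefficient; the integral identity and the cancellation of linear terms are otherwise routine.
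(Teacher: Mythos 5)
Your proof is correct, and its first half coincides exactly with the paper's own argument: both use the decomposition \eqref{eq:relu_is_ed} together with the constraint $\sum_i \gamma_i = 0$ to annihilate the antisymmetric part, reducing the claim to conditional positive definiteness of the one-dimensional distance kernel. Where you diverge is in how that remaining claim is settled: the paper simply cites the classical result that $(x,y)\mapsto -|x-y|$ is conditionally positive definite on $\R$ (Wendland, \S 8.4), whereas you prove it from scratch via the representation $|x-y| = \int_\R \big(\mathbf{1}_{\{t<x\}} - \mathbf{1}_{\{t<y\}}\big)^2\,dt$, which yields the explicit identity $-\sum_{i,j}\gamma_i\gamma_j\operatorname{ReLU}(x_i-x_j) = \int_\R g(t)^2\,dt$ with $g(t) = \sum_i \gamma_i \mathbf{1}_{\{t<x_i\}}$. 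Your route buys two things: it is fully self-contained, and it makes the equality case transparent. The paper's one-line citation leaves the strictness (equality iff all $\gamma_i=0$) implicitly to the reference, while your jump analysis of the compactly supported, piecewise-constant function $g$ proves it directly, invoking the distinctness of the $x_i$ exactly where it is needed (each $x_i$ carries a single jump of size $-\gamma_i$, and $g=0$ a.e.\ forces $g$ to vanish on every open interval between consecutive sorted points, hence every jump vanishes). One notational slip to fix: in your displayed expansion, the two linear terms should carry the factors $\big(\sum_j\gamma_j\big)$ and $\big(\sum_i\gamma_i\big)$ respectively, i.e.\ they are $\big(\sum_i\gamma_i\mathbf{1}_{\{t<x_i\}}\big)\big(\sum_j\gamma_j\big)$ and its mirror image, not $g(t)$ alone; your surrounding sentence ("vanish after factoring out $\sum_i\gamma_i=0$") shows you mean the right thing, but the display as written does not literally equal the left-hand side.
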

\begin{proof}
    Let us rewrite, using identity \eqref{eq:relu_is_ed} and the fact that $\sum_i \gamma_i = 0$:
    \begin{align}
        -\sum_{1\leq i,j \leq n } \gamma_i \gamma_j \operatorname{ReLU}(x_i - x_j) &= -\frac{1}{2}\sum_{1\leq i,j \leq n } \gamma_i \gamma_j (|x_i - x_j| + x_i - x_j) \\
        &= -\frac{1}{2}\sum_{1\leq i,j \leq n } \gamma_i \gamma_j |x_i - x_j| \,.
    \end{align}
    The fact that the Energy-Distance kernel $(x,y) \mapsto -|x-y|$ is conditionally positive definite in $\R$ is well-known, see 8.4 in \cite{Wendland_2004}, ending the proof.
\end{proof}

\section{Proof of Theorem \ref{th:relu_att_seqtoseq}}\label{app:proof_th_relu_att_seqtoseq}

To prove the result, we first prove a lemma in dimension $1$ which shows that ReLU attention can separate the groups of points.
\begin{lemma}[Splitting lemma in 1D]\label{lemma:1D_att_split_left}
    Let $S = \{ \mathbf{x_1}, \ldots, \mathbf{x_p}\}$ be a set of distinct sequences $\mathbf{x} = (x_1 , x_2, \ldots , x_n) \in \R^n$, and $K$ be a compact set such that $\underset{\mathbf{x \in S}}{\max}\max_i x_i < \min K$. Then there exist two subsets $S_1, S_2 \subset S$ and a 1D attention layer $\mathcal{A}$ such that $\mathcal{A} \equiv \Id$ on $K$ and such that 
    \begin{equation}
        \max \mathcal{A}(S_1) < \min \mathcal{A}(S_2) < \max \mathcal{A}(S_2) < \min K \,,
    \end{equation}
    where for $T\subset \R^n$ we set
    $\min T:=\min_{x\in T}\min_m x_m$ and $\max T:=\max_{x\in T}\max_m x_m$.
\end{lemma}

\begin{proof}
    Write $S = \{ x_1, \ldots, x_p\}$. Since we match points only up to permutation in 1D, we can assume that the entries in $\mathbf{x_i} = (x_{i,1}, \ldots, x_{i,n})$ are ordered increasingly.
Now, we can use the lexicographic order between sequences. 
Given two vectors $\mathbf{x}, \mathbf{y} \in \mathbb{R}^n$ where:
\[
\mathbf{x} = (x_1 , x_2, \ldots , x_n), \quad 
\mathbf{y} = (y_1, y_2, \ldots, y_n),
\]
both in increasing order, meaning that $x_i \leq x_{i+1}$ and $y_i \leq y_{i+1}$,
we define the \textbf{lexicographic order} $<_{\text{lex}}$ as:

\[
\mathbf{x} <_{\text{lex}} \mathbf{y} \quad \text{if and only if} \quad 
\exists \, k \in \{1, \ldots, n\} \text{ such that:}
\]
\begin{itemize}
    \item $x_i = y_i$ for all $i < k$, \textbf{and}
    \item $x_k < y_k$.
\end{itemize}
Without loss of generality, we assume $\mathbf x_1 \prec_{\mathrm{lex}} \ldots \prec_{\mathrm{lex}} \mathbf x_p $. Denote by $l \in \llbracket 1, n \rrbracket$ the sequence index at which they differ, i.e., there exists $j_0 \in \llbracket 1, p \rrbracket$ such that 
    \begin{itemize}
        \item $\forall i \in \llbracket 1, p \rrbracket, \forall m < l, x_{i,m} = x_{1,m}$\,,
        \item $x_{1,l} = x_{2,l} = \ldots = x_{j_0,l} < x_{j_0 + 1,l} \leq x_{j_0 + 2,l} \leq \ldots x_{p,l}$\,.
    \end{itemize}
    Define the sequence sets $S_1 \coloneqq \{ x_1, \ldots , x_{j_0} \}$ and $S_2 \coloneqq \{ x_{j_0 + 1}, \ldots , x_{p} \}$ and the thresholds $t_1 \coloneqq x_{1,l}$ and $t_2 \coloneqq \min \{x_{i,k} \, | \,   x_{i,k} > t_1\}$. Pick $L, R \in \R$ such that $L < \min S$ and $R > \max K$, and $a$ the increasing affine map such that $a(t_1) = L$ and $a(t_2) = R$, and a constant value $v \in \R$. Consider the attention mapping 
    \begin{equation}
        \mathcal{A}(x_i; x_{i,k}) \coloneqq x_{i,k} + \sum_{m=1}^n \operatorname{ReLU}(x_{i,k} - a(x_{i,m}))v \,.
    \end{equation}
    Then, with such a choice for $a$, this layer leaves $K$ untouched if $R > \max K$, since then all the ReLU terms vanish on K. On $S_1$ and $S_2$ it writes as
    \begin{equation}
        \mathcal{A}(x_i; x_{i,k}) = \begin{cases}
            x_{i,k} + \sum_{m=1}^l \operatorname{ReLU}(x_{i,k} - a(x_{i,m}))v \hspace{5mm} \text{if } i \leq j_0 \,, \\
            x_{i,k} + \sum_{m=1}^{l-1} \operatorname{ReLU}(x_{i,k} - a(x_{i,m}))v \hspace{5mm} \text{if } i > j_0 \,.
        \end{cases}
    \end{equation}
    Remark that, since the first $l-1$ terms are the same for each sequence, and the $l$-th term is the same for all sequences in $S_1$, the mappings are affine functions
    \begin{equation}
        \mathcal{A}(x_i; x_{i,k}) = \begin{cases}
            \alpha_1 x_{i,k} + \beta_1 \hspace{5mm} \text{if } i \leq j_0 \,, \\
            \alpha_2 x_{i,k} + \beta_2 \hspace{5mm} \text{if } i > j_0 \,,
        \end{cases}
    \end{equation}
    where $\alpha_1 = 1 + lv, \beta_1 = - v\sum_{m=1}^l a(x_{1,m})$, and $\alpha_2 = \alpha_1 - v, \beta_2 = \beta_1 + va(x_{1,l}) = \beta_1 + vL$. Define $M_-< M_+$ as the minimum and maximum entries of $S$. Then, we need to satisfy the inequalities
    \begin{equation}
        \begin{cases}
            0 \leq 1 + lv \leq 1 \hspace{5mm} \text{(to keep the original ordering of $M_-, M_+$ and $K$ and to control growth)} \,, \\
            \alpha_1 M_+ + \beta_1 < \alpha_2 M_- + \beta_2 \hspace{5mm} \text{(separation of $S_1$ and $S_2$)}\,, \\
            \alpha_2 M_+ + \beta_2 \leq M_+ \hspace{5mm} \text{(to ensure sequences move left)}\,.
        \end{cases}
    \end{equation}
    We will investigate the feasibility of these three inequalities, choosing an appropriate $L$. The first inequality is equivalent to 
    \begin{equation}
        - \frac{1}{l} \leq v \leq 0 \,.
    \end{equation}
    The second one, from the definition of $\alpha_1, \alpha_2, \beta_1, \beta_2$, is implied by the stronger inequality
    \begin{equation}
        v < - \frac{M_+ - M_-}{M_+ - L}\,.
    \end{equation}
    Since $a(x_{1,m}) \leq L$ for $m<l$, the third inequality is implied, if $v$ is negative (first inequality) by
    \begin{equation}
        L \leq M_+ \,,
    \end{equation}
    which is always true by definition of $L$ and $M_+$. In order for the first and second one to be compatible we need
    \begin{equation}
        L < M_+ - l(M_+ - M_-) \,.
    \end{equation}
    Choose $L$ sufficiently negative, then choose $v$ accordingly in the non-empty interval $\left(-1/l, -(M_+ - M_-)/(M_+ - L)\right)$ to conclude the proof.
\end{proof}

\begin{remark}[Singleton placement in 1D]
    Note that the proof above is greatly simplified if $l = 1$ and $S_1$ has cardinal one. Indeed, in this case, $\alpha_2 = 1$ and $\beta_2 = 0$ so that $\mathcal{A}$ is the identity on $S_2$ (and $K$). On the unique sequence in $S_1$, the layer reduces to an affine map 
    \begin{equation}
        \mathcal{A}_1(x) = (1+v)x - vL \,.
    \end{equation}
    This allows us to pick parameters $v,L$ freely, sending the unique sequence in $S_1$ arbitrarily far to the left or to the right of the compact set $K$. This observation will be useful in the proof of the following lemma.
\end{remark}
Now, we can prove the result:
\begin{lemma}[Disentanglement in 1D]\label{lemma:disentanglement_1d}
 Consider $p$ sets of $n$ distinct points $\mathbf x_i$ in  dimension $1$. There exist $2p-1$ attention steps and $p$ disjoint intervals $I_i$ such that after the composition of the $2p-1$ ReLU attention layers, one has $x_{i,k} \in I_i$.
\end{lemma}

\begin{proof}
    Fix a threshold $t \in \R$ so that all the points lie strictly to its left. We prove the result by iteratively moving the sequences to the right of $t$, after separating them using Lemma \ref{lemma:1D_att_split_left}. Set $K_0 = \{t\}$ and $U = S$ the current working set. Define $F$ a family of parked closed intervals, initially empty. By parked intervals, we mean intervals containing sequences not in the working set $U$ that are not disentangled yet, and we will use iterations that leave these sequences untouched. We will maintain at each iteration the following invariant, defining $K \coloneqq K_0 \cup \cup_{I \in F} I$:
    \begin{itemize}
        \item All points of $U$ lie strictly to the left of $K$.
        \item Every layer we apply is the identity mapping on $K$, leaving it untouched.
        \item All sequences already placed lie strictly to the right of $t$, in disjoint intervals, and are included in $K_0$.
    \end{itemize}
    
    While $U \neq \emptyset$, we do the following iteration:

    First, if $U$ has cardinal one, from the remark above, we can send its unique sequence strictly to the right of $K$, using a single attention layer, in a new disjoint interval $I$. We then apply the updates: if $F \neq \emptyset$, pick the leftmost interval $J \in F$, corresponding to a set of sequences $S_J$ and $U \gets S_J$, $F \gets F \setminus\{ J \}$, and $K_0 \gets K_0 \cup I$. This preserves the invariant. If $F = \emptyset$, set $U \gets \emptyset$, $K_0 \gets K_0 \cup I$, ending the algorithm.
    
    If $U$ contains more than one sequence, apply Lemma \ref{lemma:1D_att_split_left} to the pair $(U,K = K_0 \cup \cup_{I \in F})$. We get an attention layer that is the identity on $K$, and splits $U$ into two sets $U_1$ and $U_2$, located in disjoint intervals $I_1$ and $I_2$ such that $\max I_1 < \min I_2$, strictly to the left of $K$. Then, we keep iterating on $U_1$ while leaving the rest of the sequence untouched, through the update $U \gets U_1$ and $F \gets F \cup \{I_2\}$.

    Define $B$ as the number of active blocks, i.e. $B = 1 + |F|$, where $|F|$ is the cardinal of $F$, as well as $m = |U|$ the number of sequences in the working set. Initially, $B_{\text{init}} = 1$. At each splitting step ($|U| \geq 2$), $m$ strictly decreases, since $|U_1|<|U|$, and $B$ increases by one. At each placement step, if $F \neq \emptyset$, $m$ resets to $|S_J|$ and $B$ decreases by one. If $F = \emptyset$, then $U \gets \emptyset$, the algorithm stops and $B$ goes from $1$ to $0$. Since each split strictly decreases $m$, there cannot be an infinite number of splits without a placement step, and each placement reduces the number of unplaced sequences by one, so that there are exactly $p$ placements in the algorithm. This proves the algorithm terminates, and $B_{\text{final}} = 0$.
    
    To count the number of attention layers, define $n_p$ the number of placement steps, and $n_s$ the number of splitting steps performed by the algorithm. Since the algorithm terminates, $n_p = p$, so that 
    \begin{equation}
        0 = B_{\text{final}} = B_{\text{init}} - n_p + n_s = 1 - p + n_s \,, 
    \end{equation}
    so that $n_s = p-1$, and the total number of attention layers, one per step, is equal to
    \begin{equation}
        n_s + n_p = 2p-1 \,.
    \end{equation}
\end{proof}
A final basic ingredient to prove the result is the well-known existence of a bump function in 1D that is a combination of ReLU functions:
\begin{lemma}[ReLU bump]\label{lemma:relu_bump}
    Let $x_0 \in \R$ and $\delta >0$. Then there exists a linear combination of ReLU functions that is equal to $1$ at $x_0$, and equal to $0$ outside of the interval $]x_0 - \delta, x_0 + \delta[$.
\end{lemma}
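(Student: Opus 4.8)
The plan is to exhibit an explicit linear combination of three ReLU functions realizing a tent-shaped bump. The natural candidate is the triangular hat function centered at $x_0$ with half-width $\delta$, namely $T(x) = \max\!\left(1 - \tfrac{|x-x_0|}{\delta},\,0\right)$, which takes the value $1$ at $x_0$ and vanishes outside $(x_0-\delta,x_0+\delta)$. The only thing to establish is that such a continuous piecewise-linear function, with breakpoints at $x_0-\delta$, $x_0$, and $x_0+\delta$, can be rewritten as a combination of ReLU functions. Since $\operatorname{ReLU}(x-t)$ is itself the elementary ramp with a single slope change at $t$, I expect any piecewise-linear function to be recoverable as a suitable superposition of such ramps, and here only three knots are involved.

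Concretely, I would write down the candidate as a discrete second difference of ramps at the three knots:
\[
T(x) = \frac{1}{\delta}\Big[\operatorname{ReLU}(x - x_0 + \delta) - 2\,\operatorname{ReLU}(x - x_0) + \operatorname{ReLU}(x - x_0 - \delta)\Big],
\]
and then verify the claim region by region. On $(-\infty, x_0-\delta]$ all three terms vanish, so $T \equiv 0$. On $[x_0-\delta, x_0]$ only the first ramp is active, and $T$ rises linearly from $0$ to $1$. On $[x_0, x_0+\delta]$ the first two terms are active, and $T$ descends linearly from $1$ back to $0$. Finally, on $[x_0+\delta,\infty)$ all three terms are active, and the slopes cancel so that $T \equiv 0$. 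In particular $T(x_0)=1$ and $T\equiv 0$ outside the open interval $(x_0-\delta,x_0+\delta)$, exactly as required.

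There is no genuine obstacle here: the statement is a routine verification, so the only care needed is bookkeeping of the coefficients. The middle coefficient $-2$ is precisely what forces the cumulative slope to return to zero beyond $x_0+\delta$, since the three successive changes in slope are $+\tfrac{1}{\delta}$, $-\tfrac{2}{\delta}$, $+\tfrac{1}{\delta}$, which sum to $0$; and the global normalization $\tfrac{1}{\delta}$ fixes the apex height to be exactly $1$. I would conclude by noting that $T$ is visibly a linear combination of ReLU functions, which completes the proof and supplies the localized bump used in the disentanglement step of Theorem~\ref{th:relu_att_seqtoseq}.
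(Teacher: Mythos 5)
Your proposal is correct and coincides with the paper's own proof: you exhibit exactly the same function, $\frac{1}{\delta}\left[\operatorname{ReLU}(x-x_0+\delta)+\operatorname{ReLU}(x-x_0-\delta)-2\operatorname{ReLU}(x-x_0)\right]$, as the required bump. Your region-by-region verification is in fact more detailed than the paper's, which simply states the formula and leaves the check implicit.
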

\begin{proof}
    Just define the function
    \begin{equation}
        \phi_{x_0,\delta}(x) \coloneqq \frac{1}{\delta} \left[ \operatorname{ReLU}(x-x_0 + \delta) + \operatorname{ReLU}(x-x_0 - \delta) - 2\operatorname{ReLU}(x-x_0)\right]\,,
    \end{equation}
    which is a linear combination of three ReLU functions.
\end{proof}
Note that such a function can be modeled by using three simultaneous ReLU attention heads. We can now prove our main theorem:
\begin{proof}[Proof of the theorem]
We first start with the important remark that the normalization in the attention map can be counterbalanced in all the steps of the proof using the value matrix. Therefore, we reduce the proof to the unnormalized case of attention.

    The first step of the proof is to disentangle the sequences. We pick a unit vector $\eta$ such that, within a sequence, the projection on $\R \eta$ is injective and such that, for two different sequences $x_i, x_j$, the 1D sets $\{\langle \eta, x_{i,k} \rangle \}_{1\leq k \leq n}$ and $\{\langle \eta, x_{j,k} \rangle \}_{1\leq k \leq n}$ are distinct. This is possible because the family of directions $\{\R (x_{i,k} - x_{j,l})\}$ is finite. We apply the disentanglement Lemma \ref{lemma:disentanglement_1d} with values proportional to the vector $\eta$, so that the sequences are disentangled along $\R \eta$.

    Next, we match the tokens on the components orthogonal to $\eta$. For each sequence $i$ and each position $k$, define the orthogonal residual
    \begin{equation}
        r_{i,k} \coloneqq p_{\eta^\perp}(y_{i,k}-x_{i,k}) \,,
    \end{equation}
    where $p_{\eta^\perp}$ is the orthogonal projection onto $\eta^\perp$. We also define $\delta \coloneqq \min \{ |\langle \eta, x_{i,k} - x_{j,l} \rangle| \,,\,(i,k) \neq (j,l) \} $, which is non-zero, since the $\eta$-components of the $x$ entries are separated after disentanglement. For a given $x_{i,k}$, the update 
    \begin{equation}
        x \mapsto x + r_{i,k} \phi_{\langle \eta, x_{i,k} \rangle, \delta}(\langle \eta, x \rangle)\,,
    \end{equation}
    may be realized with a single ReLU attention layer (with three heads), that moves only $x_{i,k}$, by exactly $r_{i,k} \in \eta^{\perp}$, and leaves all the other points untouched. Do this for all $x_{i,k}$ where $r_{i,k} \neq 0$. After at most $pn$ layers, all points verify
    \begin{equation}
        p_{\eta^{\perp}}(x_{i,k}) = p_{\eta^{\perp}}(y_{i,k})\,.
    \end{equation}
    
    Finally, we need to match the $\R \eta$ components. We apply (iteratively or in parallel) the ReLU updates
    \begin{equation}
        x \mapsto x + \langle \eta, y_{i,k} - x_{i,k} \rangle \phi_{\langle \eta, x_{i,k} \rangle,\delta}(\langle \eta , x \rangle)\eta\,,
    \end{equation}
    for each $x_{i,k}$.

    The first step, from Lemma \ref{lemma:disentanglement_1d}, is done in $2p-1$ layers (one head per layer), and the next two are done in $pn$ layers each (three heads per layer), so that the total layer count is equal to 
    \begin{equation}
        2p - 1 + 2pn = 2p(n+1)-1\,.
    \end{equation}
\end{proof}

\begin{remark}
    The argument could be carried with only one head per layer, without the ReLU bump. However, each layer would depend on the outcome of the previous ones, in contrast with our proof, where layers are independent from one another in the matching phase, after the initial splitting phase.
\end{remark}

\section{Background on measure-theoretic Transformers in the mean-field limit}\label{app:background_mean_field_transformers}

Here, we detail the background material required for the measure-theoretic expressivity results in \textsection \ref{subsec:measure_approx_transfo}. We briefly recall how standard multi-head attention can be interpreted as an in-context mapping on token sequences, and how this formulation extends naturally to the mean-field setting \cite{furuya2024transformersuniversalincontextlearners}. This provides the formal link between the discrete Transformer architecture and the contextual map $\Gamma_\theta$ used in our universality theorem.

\subsection{Attention as in-context mapping for a token sequence}

An attention head is a parametrized map that takes as input a token sequence $X = (x_1, \ldots, x_n) \in \R^{n \times d}$ and outputs a token sequence of the same length $\mathcal{A}_\theta(X) \in \R^{n \times d}$. Attention produces context-dependent updates by allowing each token to aggregate information from all others, in contrast with MLP mappings that treat each input in an independent way. In practice, several attention heads are combined at each layer into a multi-head attention layer with a residual connection, written as:
\begin{equation}\label{eq:multi_head_attention}
    \mathcal{MA}_\theta (x_i,X) = x_i + \sum_{h=1}^H W^h \mathcal{A}_{\theta^h}(x_i,X) \,.
\end{equation}
An MLP with parameters $\xi$ is defined as a function that maps each token individually and independently through:
\begin{equation}\label{eq:mlp_def}
    F_{\xi}(x_i,X) = F_{\xi}(x_i)\,.
\end{equation}
Then, a Transformer model with $L$ layers and parameters $(\theta_i,\xi_i)_{i=1}^L$ is the composition of attention in-context mappings and MLP layers:
\begin{equation}\label{eq:transf_comp_tokens}
    \mathcal{T}_{(\theta_i,\xi_i)_{i=1}^L} \coloneqq F_{\xi_1} \circ \mathcal{MA}_{\theta_1} \circ \cdots \circ F_{\xi_L} \circ \mathcal{MA}_{\theta_L} \,.
\end{equation}

\subsection{Mean-field limit, measure-theoretic Transformers}

Note that Formula \eqref{eq:multi_head_attention} makes sense for an arbitrary number $n$ of tokens. To extend this formulation for an arbitrary, possibly infinite, number of tokens, it is convenient to define a mean-field version of Transformers. For a general probability measure $\mu \in \mathcal{P}(\R^d)$, the corresponding contextual mapping is defined at each point $x \in \R^d$ by:
\begin{equation}\label{eq:mean_field_attention}
    \Gamma_{\theta}(x,\mu) \coloneqq x + \sum_{h=1}^H W^h \int \frac{ S(Q^h x, K^h y)}{\int N(Q^hx,K^hz)d\mu(z)} V^h y d\mu(y) \,,
\end{equation}
where $S$ is a similarity or relation function, and $N$ is a normalization function.

Note that in both cases, the defined mapping is a strict generalization of the standard token attention layer. Indeed, if $X = (x_1, \ldots, x_n)$ is a token sequence, then
\begin{equation}
    \mathcal{MA}_{\theta}(x,X) = \Gamma_{\theta}\left( x, \frac{1}{n}\sum_{j=1}^n \delta_{x_j} \right) \,.
\end{equation}
Then, a multi-head attention layer can be interpreted as the push-forward of a measure $\mu$ through the $\mu$-dependent contextual mapping $\Gamma_{\theta}(\mu) :x \mapsto \Gamma_{\theta}\left( x,\mu \right)$. The token mapping $X=(x_i, 1\leq i \leq n) \mapsto (\mathcal{MA}_{\theta}(x_i,X), 1 \leq i \leq n)$ is a special case of the measure mapping
\begin{equation}
    \mu \mapsto \Gamma_{\theta}(\mu)\#\mu\,.
\end{equation}
Following the notations from \cite{furuya2024transformersuniversalincontextlearners}, the composition of layers in a Transformer can be written as the composition of push-forward operations via
\begin{equation}
    (\Gamma_{\theta_2} \diamond \Gamma_{\theta_1})(x,\mu) \coloneqq \Gamma_{\theta_2}( \Gamma_{\theta_1}(x,\mu), \Gamma_{\theta_1}(\mu)\# \mu) \,. 
\end{equation}
The MLP layers $F_{\xi}$ are interpreted as mappings independent of the context, i.e. $F_{\xi}(x,\mu) = F_{\xi}(x)$, so that a measure-theoretic Transformer with $L$ layers from $\mathcal{P}(\R^d)$ to $\mathcal{P}(\R^d)$ is the composition
\begin{equation}
    F_{\xi_L} \diamond \Gamma_{\theta_L} \diamond \cdots \diamond F_{\xi_1} \diamond \Gamma_{\theta_1}  \,.
\end{equation}
This definition is equivalent to Formula \ref{eq:transf_comp_tokens}. For further details and applications of this approach, we refer to \cite{sander2022sinkformerstransformersdoublystochastic,castin2024smoothattention, furuya2024transformersuniversalincontextlearners}.

\subsection{Proof of Proposition \ref{prop:transf_universality_measure}}\label{AppendixFirstProof}

Following the methods of \cite{furuya2024transformersuniversalincontextlearners}, we prove the density of a particular function class.
Let us consider the elementary mappings 
\begin{equation}
    \gamma_{\lambda}(x,\mu) \coloneqq \langle x,a \rangle + b + \int v\operatorname{ReLU}(\langle x,a \rangle - \langle y,a \rangle + c)d\mu(y) \,,
\end{equation}
where $\lambda \coloneqq (a,b,c,v) \in \R^d \times \R \times \R \times \R$. These mappings are obtained by composing a linear MLP and a simple attention head, operating in dimension one. Define the affine MLP $F_{\xi}(x) = \langle x,a \rangle + b$, the attention parameters $\theta = (a_q,b_q,a_k,b_k,a_v,b_v) \in \R^6$, and a simple MLP-attention composition by
\begin{equation}
    \Gamma_\theta \diamond F_{\xi} (x,\mu) = \langle x,a \rangle + b + \int (a_v (\langle a,y \rangle + b) + b_v) \operatorname{ReLU}\left[a_q\langle x,a \rangle - a_k\langle y,a \rangle + b_q - b_k\right] d\mu(y) \,.
\end{equation} 
Then if $\lambda = (a,b,c,v)$, setting $\xi = (a,b)$ and $\theta = (1,c,1,0,0,v)$, we get
\begin{equation}
    \gamma_{\lambda} = \Gamma_{\theta} \diamond F_{\xi} \,,
\end{equation}
Now, define the algebra spanned by these elementary mappings:
\begin{equation}
    \mathcal{A} \coloneqq \left\{ \Omega \times\mathcal{P}(\Omega) \ni (x,\mu) \mapsto \sum_{n=1}^N \gamma_{\lambda_{1,n} }\odot \dots \odot \gamma_{\lambda_{T,n} } \,;\, T,N \in \mathbb{N}\right\}\,.
\end{equation}
The rest of the proof of Proposition \ref{prop:transf_universality_measure} is the same as Theorem 1 an \cite{furuya2024transformersuniversalincontextlearners}, since only the definition of the base algebra $\mathcal{A}$ is different, once we have the following result:
\begin{proposition}\label{ThDenseAlgebra}
    The algebra $\mathcal{A}$ is dense in the space of $( l^2 \times\text{weak}^*)$-continuous functions from $\Omega \times \mathcal{P}(\Omega) $ to $\R$. 
\end{proposition}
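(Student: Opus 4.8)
The plan is to establish density via the Stone--Weierstrass theorem for the compact space $\Omega \times \mathcal{P}(\Omega)$, equipped with the product of the $l^2$ topology on $\Omega$ (assumed compact) and the weak$^*$ topology on $\mathcal{P}(\Omega)$, which is compact by Prokhorov's theorem. By construction $\mathcal{A}$ is closed under finite sums and pointwise products (the defining set is explicitly the algebra generated by the elementary maps $\gamma_\lambda$ under $\odot$), so $\mathcal{A}$ is a subalgebra of $C(\Omega \times \mathcal{P}(\Omega))$. To invoke Stone--Weierstrass, I would verify two conditions: that $\mathcal{A}$ contains the constants (or at least a nonvanishing function), and that $\mathcal{A}$ separates points of $\Omega \times \mathcal{P}(\Omega)$. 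The constants are immediate, since taking $v = 0$ and $a = 0$ in $\gamma_\lambda$ yields the constant map $(x,\mu) \mapsto b$.

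The substance of the proof is the point-separation step, and here there are two distinct cases. First, to separate two pairs $(x,\mu)$ and $(x',\mu)$ with the \emph{same} measure but $x \neq x'$: the affine part $\langle x, a\rangle + b$ of $\gamma_\lambda$ already separates these for a suitable choice of $a \in \R^d$, so this case is easy. The genuinely delicate case is separating $(x,\mu)$ from $(x,\nu)$ with $\mu \neq \nu$. Here I would rely on the integral term $\int v\,\operatorname{ReLU}(\langle x,a\rangle - \langle y,a\rangle + c)\,d\mu(y)$. The key observation is that as the parameters $(a,c)$ vary, the functions $y \mapsto \operatorname{ReLU}(\langle x,a\rangle - \langle y,a\rangle + c)$ form a family of one-dimensional ReLU ridge functions whose integrals against a measure determine the measure. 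Concretely, if $\mu \neq \nu$ then there exists a direction $a$ along which the pushforwards $a_\#\mu$ and $a_\#\nu$ of the two measures differ (otherwise all one-dimensional projections coincide and the measures are equal, e.g. by the Cramér--Wold argument). Two distinct compactly supported probability measures on $\R$ must differ in their integral against some ReLU function $t \mapsto \operatorname{ReLU}(s - t)$, since such functions (as $s$ ranges over $\R$) recover the cumulative distribution up to affine terms — indeed $\frac{d}{ds}\int \operatorname{ReLU}(s-t)\,d\rho(t) = \rho((-\infty,s])$ wherever the CDF is continuous. This yields parameters $(a,c,v)$ for which $\gamma_\lambda(x,\mu) \neq \gamma_\lambda(x,\nu)$.

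I anticipate the main obstacle to be the rigorous execution of this point-separation for measures: one must argue that the two-parameter family of linear functionals $\rho \mapsto \int \operatorname{ReLU}(s - \langle y, a\rangle)\,d\rho(y)$ is rich enough to distinguish any two distinct measures, and handle the technical point that separation must hold at a \emph{fixed} $x$ (so that the additive constant $\langle x,a\rangle$ inside the ReLU becomes a free shift absorbable into $c$). The cleanest route is to reduce to one dimension via the projection $a$, invoke the fact that the integrated-ReLU transform injectively encodes a compactly supported measure on $\R$ (essentially via its second antiderivative / the map to its distribution function), and then lift back using Cramér--Wold to cover all of $\R^d$. Once separation and the constants are in hand, Stone--Weierstrass gives uniform density of $\mathcal{A}$ in $C(\Omega \times \mathcal{P}(\Omega); \R)$, which is exactly the claim of Proposition~\ref{ThDenseAlgebra}. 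A minor subtlety worth flagging is whether one needs the full algebra or only the $\R$-valued continuity claim; since the target is $\R$, and $\mathcal{A}$ consists of real-valued maps, the classical real Stone--Weierstrass theorem applies directly without any conjugation-closure condition.
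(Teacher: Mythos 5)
Your proposal is correct and follows essentially the same route as the paper: Stone--Weierstrass on the compact space $\Omega \times \mathcal{P}(\Omega)$, constants from $a=v=0$, separation of the $x$-coordinates via the affine part (taking $v=0$), and separation of measures by observing that the derivative in the shift parameter of $c \mapsto \int \operatorname{ReLU}(\langle x,a\rangle - \langle y,a\rangle + c)\,d\mu(y)$ recovers the CDF of the projected measure, followed by a Cram\'er--Wold / Radon-transform injectivity argument to lift from one dimension to $\R^d$. The technical points you flag (absorbing $\langle x,a\rangle$ into the shift, reducing to pushforwards $P_a \# \mu$) are exactly the ones the paper's proof executes.
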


\begin{proof}
    First, note that if $\Omega$ is compact, then $\Omega \times \mathcal{P}(\Omega)$ is also compact, and that functions in $\mathcal{A}$ are continuous. Furthermore, setting $a=v=0$ and $b=1$ shows that non-zero constant functions are included in $\mathcal{A}$. Now, let us prove that $\mathcal{A}$ separates points in $\Omega \times \mathcal{P}(\Omega)$, in order to use the Stone-Weierstrass theorem. 
    
    Let $(x,\mu),(x',\mu') \in \Omega \times \mathcal{P}(\Omega)$, and suppose that for all $\lambda = (a,b,c,v)$, we have $\gamma_{\lambda}(\mu,x) = \gamma_{\lambda}(\mu',x')$. First, setting $v = 0$ implies that for all $a \in \R^d$, $\langle x,a\rangle = \langle x',a \rangle$, so that $x = x'$. To prove that $\mu = \mu'$, we consider the operator:
    \begin{equation}
        L_{x,a}(\mu)(c) \coloneqq \int \operatorname{ReLU}(\langle a, x-y \rangle + c) d\mu(y) \,.
    \end{equation} 
    Let $\mu,\mu' \in \mathcal{P}(\Omega)$, and suppose that for all $\lambda = (a,b,c,v)$, we have $\gamma_{\lambda}(x,\mu) = \gamma_{\lambda}(x,\mu')$. This equality implies that for all $a,c$, we have $L_{x,a}(\mu)(c) = L_{x,a}(\mu')(c)$.
    Let us begin with the one-dimensional case, supposing that $\Omega \subset \R$.
    In dimension 1, the operator $L_{x,1}(\mu)$ is linear in $\mu$, differentiable in $c$ almost everywhere and verifies: 
    \begin{equation}
        L_{x,1}(\mu)'(x) = \int_{-\infty}^{x+c} d\mu(y) \,.
    \end{equation}
    This proves the injectivity of $L_{x,a}$ in the one-dimensional case, by injectivity of the cumulative distribution function. Now, for the general case $\Omega \subset \R^d$, we can see that the higher dimension operator corresponds to its one-dimensional version applied to the one-dimensional measure $P_a \# \mu$, where $P_a : x \mapsto \langle x,a \rangle$. This proves that $P_a \# \mu = P_a \# \mu'$, for all $a \in \R^d$, so that $\mu = \mu'$ by injectivity of the Radon transform. All the conditions to apply the Stone-Weierstrass theorem to $\mathcal{A}$ are met, completing the proof.
\end{proof}

\section{Hyperparameters for LRA}\label{app:lra_hyper}

The hyperparameters used for the LRA experiments are collected in table \ref{tab:lra_hparams_tasks}.

\begin{table}[t]
\centering
\begin{tabular}{l|ccccc}
\toprule
\textbf{Hyperparameter} 
& \textbf{ListOps}
& \textbf{Text}
& \textbf{Retrieval}
& \textbf{Image}
& \textbf{Pathfinder} \\
\midrule
Layers          & 6 & 8 & 8 & 6 & 6 \\
Hidden size     & 256 & 256 & 128 & 256 & 128 \\
Attention heads & 4 & 4 & 4 & 4 & 4 \\
FFN dimension   & 512 & 512 & 256 & 512 & 256 \\
Pos.~encoding   & RoPE & RoPE & RoPE & Learned & Learned \\
\bottomrule
\end{tabular}
\caption{Hyperparameters used for the LRA experiments. 
All tasks use mean pooling and both models share identical settings.}
\label{tab:lra_hparams_tasks}
\end{table}

\vspace{0.1cm}

\section{The hat kernel}\label{AppendixWendland}

The hat kernel belongs to the family of Wendland kernels:

\begin{proposition}[Wendland Kernels]
Let $l \in \mathbb{N}$, define
 $\phi_{l} : [0,\infty) \to \mathbb{R}$ be given by
\begin{equation}
    \phi_{l}(r) = ((1-r)_+)^l\,.
\end{equation}
Then for all $l \geq \left\lfloor \frac{d}{2} \right\rfloor + 1$, the radial function
\begin{equation}
\Phi(x) = \phi_{l}(\|x\|), \quad x \in \mathbb{R}^d,
\end{equation}
is positive definite on $\mathbb{R}^d$, compactly supported in the unit ball, and continuous.
\end{proposition}
The hat kernel or ReLU-bump is the case $l = 1$, and in this case, the fact that the kernel is positive definite in one dimension can be obtained since $\phi_1 = \mathbf 1_{[-1/2,1/2]} \star \mathbf 1_{[-1/2,1/2]}$ where $\star$ denotes the convolution. Therefore, its Fourier transform is $\widehat \phi_1(\omega) = \operatorname{sinc}(\omega)^2$. 
From Section \ref{SecQuasiLinearComputation}, this 1D kernel can be computed exactly in $n \log n$ operations.
Using a 1D positive definite kernel, one can construct positive definite kernels in any dimension $d$ by projecting with orthogonal projections $\pi_\theta$ and taking expectations on the direction $\theta \in \mathbb S^d$ under the uniform measure on the sphere: for $x,y \in \mathbb R^d$
\begin{equation}\label{EqSlicingKernels}
    K(x,y) \coloneqq \mathbb E_{\mathbb S^{d-1}}[k(\pi_\theta(x),\pi_\theta(y))]\,.
\end{equation}
Now, for translation-invariant 1D kernels $k(x,y) = \phi(|x - y|)$, the resulting kernel $K$ is translation and rotation-invariant, and thus can be expressed as $K(x,y) = \Phi(\|x - y\|)$ where 
\begin{equation}
    \Phi(r) = c \int_0^1 \phi(tr) (1-t^2)^{\frac{d-3}{2}}dt\,,
\end{equation}
and $c = \frac{2\Gamma(d/2)}{\sqrt{\pi} \Gamma(\frac{d-1}{2})}$, where $\Gamma$ is the Gamma function. Although an explicit formula can be computed explicitly with the Beta function in the case $\phi_1$, the approximation scheme consists in using Monte-Carlo methods on Formula \eqref{EqSlicingKernels} to benefit from the $O(n \log n)$ computation in 1D, thereby introducing a mean square error which usually decreases as $1/\sqrt{p}$ where $p$ is the number of slices.
To introduce more variety in the shape of such kernels, one can use a mixture of scaled versions of $\phi_1$, that is $\sum_{i = 1}^k \beta_i \phi_1(r/\alpha_i)$ where $\alpha_i,\beta_i$ are positive real parameters. For more detailed discussions on slicing kernels, we refer the reader to \cite{NEURIPS2019_f0935e4c,hertrich2024generativeslicedmmdflows}.

\vspace{0.1cm}

\section{Other experiments with tiny ViT}\label{AppendixTinyViT}

In Figure~\ref{FigureOtherExperiments}, we evaluate a sliced variant of softmax attention and compare it to standard softmax attention. The sliced softmax yields lower accuracy on CIFAR-10. Given the small size of this dataset, one might expect that introducing additional inductive bias would improve performance. However, this is not observed for softmax attention. This suggests that the modest performance gains observed with sliced ReLU attention cannot be attributed solely to increased inductive bias.
Also in Figure~\ref{FigureOtherExperiments}, we report, for completeness, the best results we obtained with linear attention. In our experiments, this approach required careful tuning and exhibited sensitivity to training hyperparameters.

\begin{figure}[h]
    \centering
    \includegraphics[width=0.8\linewidth]{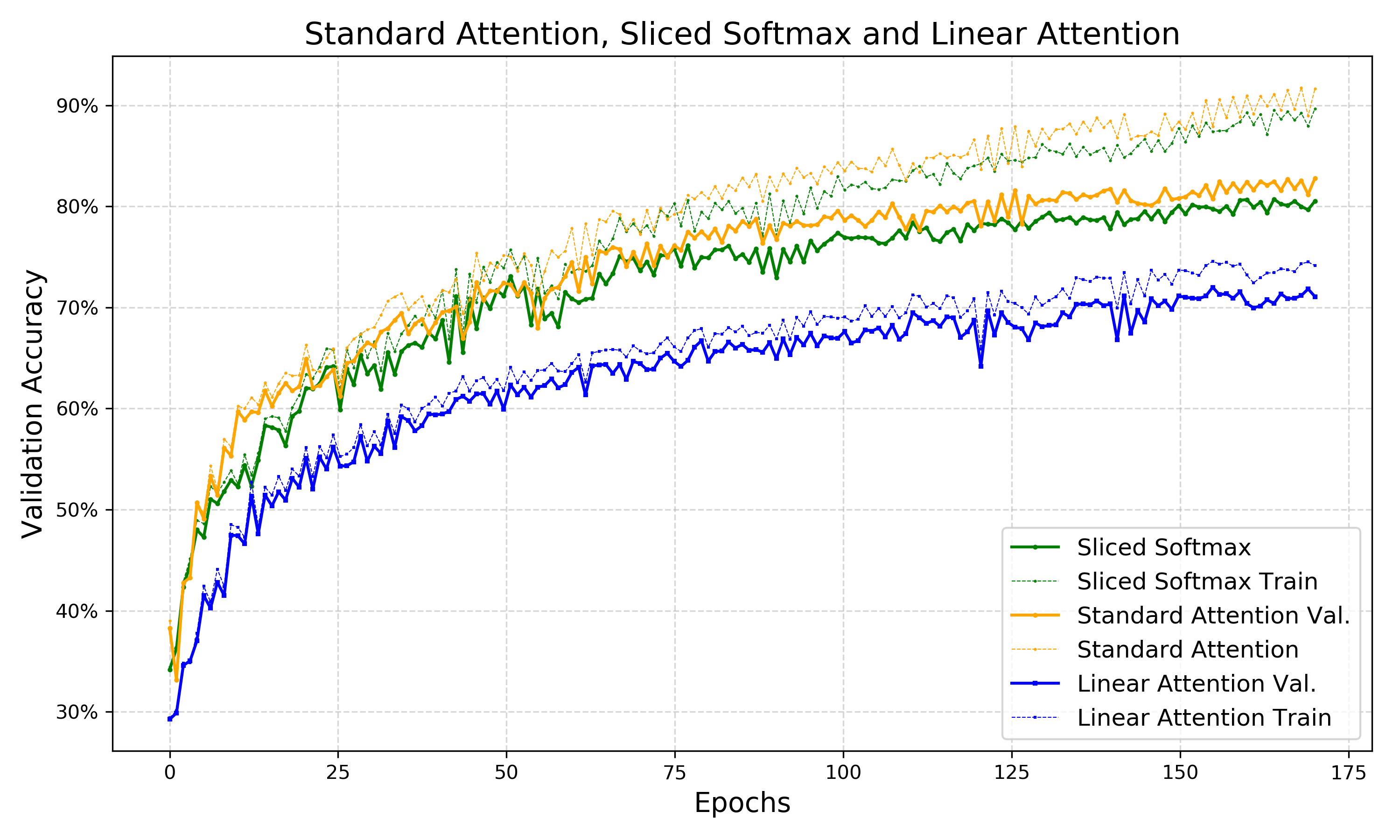}
    \caption{Slicing the softmax attention results in lower performance on CIFAR-10. For each setup, we selected the best hyperparameters from a small grid search, and report the median result over four independent runs.}
    \label{FigureOtherExperiments}
\end{figure}

\section{Structure of Sliced ReLU-Bump Attention}
\label{app:visu_band}

In this section, we provide a qualitative visualization of the structure induced by sliced ReLU-bump attention after MLM pretraining. Sliced ReLU-bump attention operates on a restricted set of keys determined by a data-dependent bump window, parametrized by an effective bandwidth. This design gives rise to interpretable geometric quantities, such as the effective bandwidth per head and the number of query/key token interactions. All visualizations in this section are obtained from the sliced ReLU-bump model after MLM pretraining.

We first examine aggregate statistics characterizing the global behavior of sliced ReLU-bump attention across layers and heads.
In particular, we focus on the effective bandwidth of the bump window and on the mean number of keys selected per query token, averaged over a random subset of the MLM training corpus.
These quantities provide a coarse but stable description of how attention locality and sparsity evolve throughout the network.

\begin{figure}[htbp]
    \centering
    \begin{subfigure}[t]{0.49\linewidth}
        \centering
        \includegraphics[width=\linewidth]{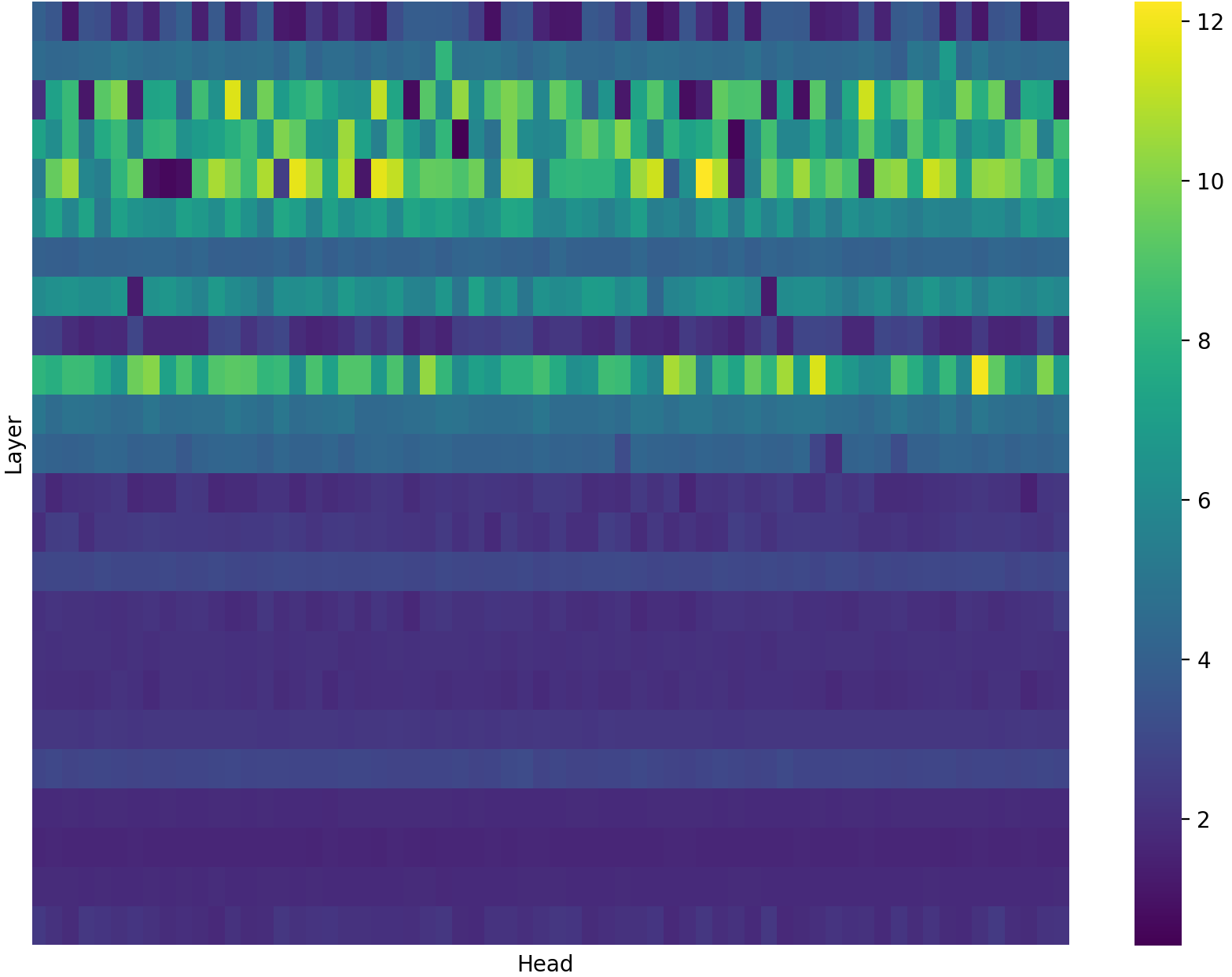}
        \caption{Effective bandwidth.}
        \label{fig:relubump_bw_heatmap}
    \end{subfigure}\hfill
    \begin{subfigure}[t]{0.49\linewidth}
        \centering
        \includegraphics[width=\linewidth]{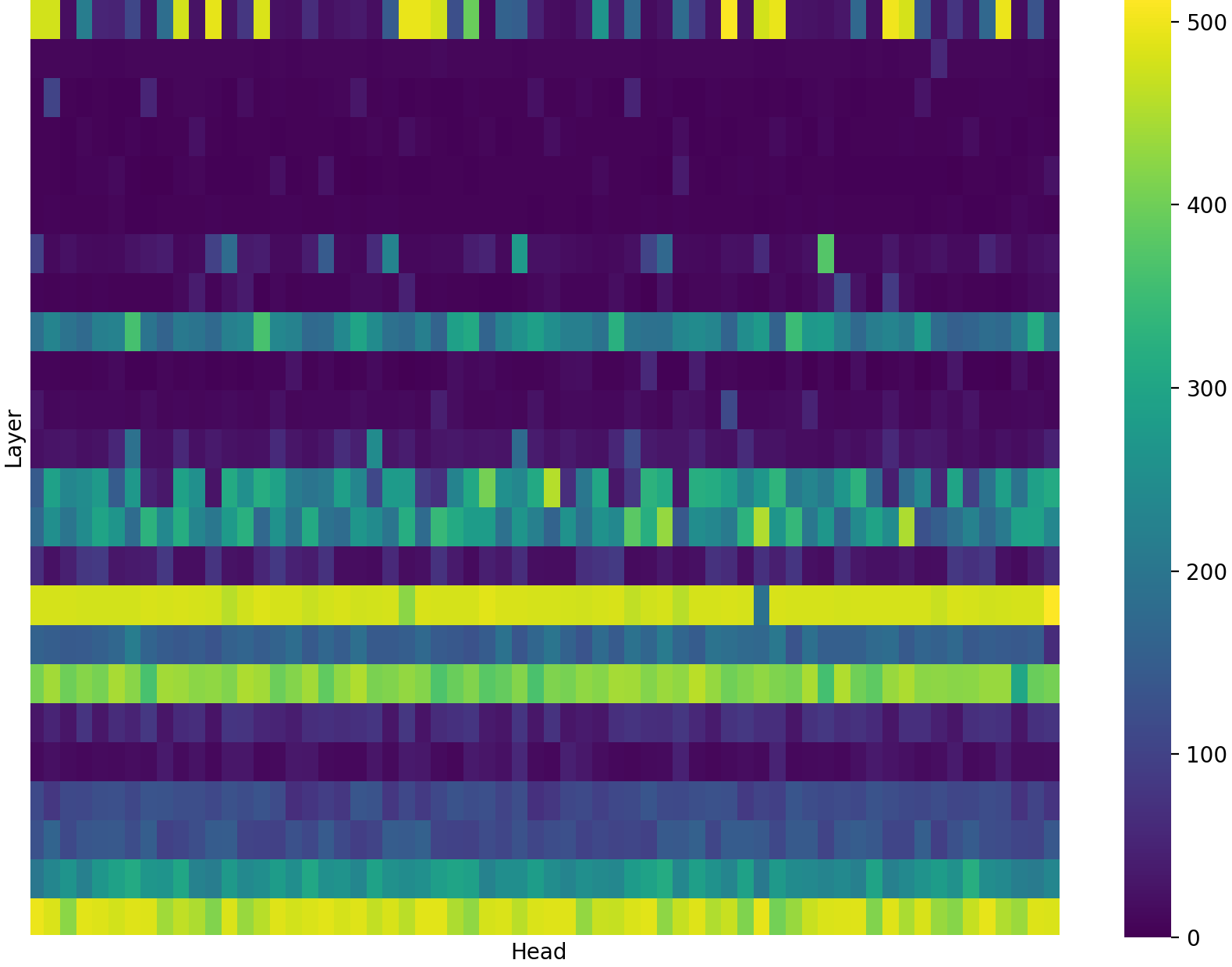}
        \caption{Mean number of keys in the bump window.}
        \label{fig:relubump_win_keys_heatmap}
    \end{subfigure}

    \caption{
    Structure of sliced ReLU-bump attention across layers and heads.
    Rows correspond to layers, ordered from early (top) to deep (bottom), and columns correspond to attention heads.
    Statistics are computed over a random text batch from the MLM pretraining dataset.
    }
    \label{fig:relubump_aggregate_heatmaps}
\end{figure}

Figure~\ref{fig:relubump_aggregate_heatmaps} shows a non-uniform organization of sliced ReLU-bump attention across layers and heads.
The first layers (top rows) exhibit a more heterogeneous distribution of effective bandwidths across heads, while deeper layers tend to display more homogeneous bandwidth profiles.
Variations in effective bandwidth do not directly translate into variations in the number of active keys, as shown by the distinct patterns observed in the bump window size heatmap.

We also visualize attention maps produced by sliced ReLU-bump attention on natural language inputs. For each example, we compute the full quadratic attention matrix for a fixed head and layer, rather than relying on the sliced computation, in order to directly examine the induced attention patterns.

Figure~\ref{fig:relubump_attention_examples} illustrates several distinct attention regimes, including local banded patterns, structured sparse interactions, clause-level block structures, and multi-block configurations.
These patterns are observed after MLM pretraining, without any explicit architectural constraint enforcing such behavior.

Overall, these observations indicate that sliced ReLU-bump attention is compatible with a variety of interaction structures while retaining its quasi-linear formulation.

\clearpage

\end{document}